\documentclass{article}
\PassOptionsToPackage{table,dvipsnames}{xcolor}
\usepackage{iclr2027_conference,times}

\usepackage[utf8]{inputenc} 
\usepackage[T1]{fontenc}    
\usepackage{hyperref}       
\usepackage{url}            
\usepackage{booktabs}       
\usepackage{amsfonts}       
\usepackage{wrapfig,lipsum}
\usepackage{amsthm,mathrsfs,amsmath,amssymb}
\usepackage{caption}
\usepackage{subcaption} 
\usepackage{graphicx,graphics}
\usepackage{svg}
\usepackage[noend]{algpseudocode}  
\usepackage{algorithm}
\usepackage{algorithmicx}
\usepackage{bbm}
\usepackage{bm}
\usepackage{tikz}
\usetikzlibrary{3d, calc}
\usepackage{svg}
\usepackage{xspace}
\graphicspath{{figures/}}
\usepackage[capitalize,noabbrev]{cleveref}
\usepackage{xcolor}
\definecolor{citeblue}{RGB}{0,70,140}
\hypersetup{
    colorlinks=true,
    citecolor=citeblue,
    linkcolor=citeblue,
    urlcolor=citeblue,
}
\usepackage{paralist} 
\usepackage{multirow, makecell, colortbl, hhline}
\usepackage{dirtytalk}
\usepackage{enumitem}

\DeclareRobustCommand{\cactustitlemark}{%
  \tikz[x=0.95em,y=0.95em,baseline=-0.10em]{%
    \fill[ForestGreen!85!black]
      (0.28,0) -- (0.48,0) -- (0.48,0.37)
      -- (0.60,0.37) .. controls (0.73,0.37) and (0.77,0.44) .. (0.77,0.55)
      -- (0.77,0.72) .. controls (0.77,0.82) and (0.62,0.82) .. (0.62,0.72)
      -- (0.62,0.55) -- (0.48,0.55) -- (0.48,0.88)
      .. controls (0.48,1.04) and (0.28,1.04) .. (0.28,0.88)
      -- (0.28,0.42) -- (0.15,0.42) -- (0.15,0.62)
      .. controls (0.15,0.72) and (0,0.72) .. (0,0.62)
      -- (0,0.42) .. controls (0,0.29) and (0.06,0.24) .. (0.18,0.24)
      -- (0.28,0.24) -- cycle;
    \draw[white!35!ForestGreen,line width=0.028em,line cap=round]
      (0.37,0.10) -- (0.37,0.87)
      (0.075,0.43) -- (0.075,0.61)
      (0.695,0.55) -- (0.695,0.71);
    \draw[ForestGreen!85!black,line width=0.023em,line cap=round]
      (0.28,0.73) -- (0.24,0.76)
      (0.48,0.81) -- (0.52,0.84)
      (0.77,0.63) -- (0.81,0.65);
    \fill[ForestGreen!85!black] (0.38,-0.17) circle[radius=0.10em];
  }%
}

\newcommand{\ourmethod}{\texttt{CACTUS}\xspace}
\newcommand{\slr}{$(\bfS + \bfL)$\xspace}

\definecolor{aurometalsaurus}{rgb}{0.43, 0.5, 0.5}
\definecolor{britishracinggreen}{rgb}{0.0, 0.26, 0.15}
\definecolor{burntumber}{rgb}{0.54, 0.2, 0.14}
\definecolor{cobalt}{rgb}{0.0, 0.28, 0.67}
\definecolor{bulgarianrose}{rgb}{0.28, 0.02, 0.03}
\definecolor{ceruleanblue}{rgb}{0.16, 0.32, 0.75}
\definecolor{darkgreen}{RGB}{0,128,0}
\definecolor{darkred}{rgb}{0.74, 0.2, 0.24}

\newcommand{\bfU}{\mathbf{U}}
\newcommand{\bfV}{\mathbf{V}}

\newcommand{\bfY}{\mathbf{Y}}
\newcommand{\bfM}{\mathbf{M}}

\newcommand{\bfH}{\mathbf{H}}
\newcommand{\bfG}{\mathbf{G}}
\newcommand{\bfI}{\mathbf{I}}
\newcommand{\bfE}{\mathbf{E}}
\newcommand{\bfP}{\mathbf{P}}
\newcommand{\bfB}{\mathbf{B}}
\newcommand{\bfQ}{\mathbf{Q}}

\newcommand{\bfA}{\mathbf{A}}

\newcommand{\bfDelta}{\mathbf{\Delta}}
\newcommand{\bfX}{\mathbf{X}}

\newcommand{\bfW}{\mathbf{W}}
\newcommand{\bfWhat}{\widehat{\mathbf{W}}}

\newcommand{\bfS}{\mathbf{S}}
\newcommand{\bfL}{\mathbf{L}}
\newcommand{\bfR}{\mathbf{R}}

\newcommand{\calC}{\mathcal{C}}
\newcommand{\calL}{\mathcal{L}}
\newcommand{\rk}[1]{\operatorname{rank}\left({#1}\right)}

\newcommand{\diag}[1]{\operatorname{diag}\left({#1}\right)}

\newcommand{\Tr}[1]{\operatorname{Tr}\left({#1}\right)}

\newcommand{\R}{\mathbb{R}}

\newtheorem{thm}{Theorem}
\crefname{thm}{Theorem}{Theorems}
\Crefname{thm}{Theorem}{Theorems}

\usepackage[dvipsnames]{xcolor}
\usepackage{tikz}
\usetikzlibrary{shapes.geometric, arrows}
\usetikzlibrary{positioning}
\usetikzlibrary {arrows.meta}
\usetikzlibrary{decorations.pathreplacing,calligraphy}

\iclrfinalcopy
\usepackage{enumitem}

\title{%
    Two for the Price of One: Curvature-Aware Pruning Preserves Influence Functions Too\nobreak\hspace{0.24em}\cactustitlemark
}

\author{%
  \makebox[\dimexpr\textwidth-2\tabcolsep\relax][c]{%
    \begin{tabular}{c}
      Mehdi Makni$^{1}$\thanks{Corresponding author: \texttt{mmakni@mit.edu}}
      \quad Natalia Ponomareva$^{2}$
      \quad Rahul Mazumder$^{1}$ \\
      {\normalfont $^{1}$Operations Research Center, Massachusetts Institute of Technology} \\
      {\normalfont $^{2}$Google Research}
    \end{tabular}%
  }%
}
\begin{document}
\maketitle
\begin{abstract}
Post-training pruning enables efficient deployment, but conventional methods do not explicitly preserve internal-gradient alignment between sparse and dense models.
We study novel connections between downstream performance of compressed models and the ability to retain gradient information via influence-function preservation. 
Central to this connection is a curvature-aware objective that accounts for both layer activations and backpropagated gradients. Despite its promise, this objective remains underexplored in pruning due to a lack of efficient solvers.
We introduce \ourmethod, a novel ADMM-based optimizer supporting both pruning and sparse-plus-low-rank decompositions of pre-trained weights. Compared to LLM-Surgeon which targets the same objective, \ourmethod can achieve over an order-of-magnitude speedup.
At $50\%$ pruning of Qwen3-4B, \ourmethod reduces the C4 perplexity gap to the dense model by $45\%$, improves compressed-to-dense per-sample gradient cosine similarity on WikiText-2 from $0.670$ to $0.755$, and runs over $28\times$ faster than LLM-Surgeon on substantially weaker hardware.
By preserving influence functions, \ourmethod supports effective data selection. On Llama-3.2-3B, fine-tuning a fresh dense model on the $5\%$ of data selected by \ourmethod achieves $49.44\%$ accuracy on BIG-Bench Hard, compared with $47.22\%$ using ALPS and $49.54\%$ using dense-model selection. Finally, integrating \ourmethod with LoGRA, a recent gradient-projection strategy, on Qwen3-4B with MLP layers pruned to 2:4 sparsity and a rank-64 correction retains a $0.711$ cosine alignment with dense LoGRA-projected C4 gradients.
\end{abstract}

\section{Introduction}
\label{sec:introduction}


Large Language Models (LLMs)~\citep{achiam2023gpt,dubey2024llama,team2023gemini} have driven advances in reasoning, code, and mathematics, but their sheer size, often measured in billions of parameters, makes both deployment and downstream inference expensive. These costs have motivated \textbf{two parallel research threads} that have advanced largely independently.

\underline{\textbf{The first thread}} is \emph{post-training pruning}. Methods such as SparseGPT~\citep{frantar2023sparsegpt}, WANDA~\citep{sun2023simple}, and ALPS~\citep{meng2024alps} use a small calibration dataset to minimize the \emph{layerwise-reconstruction error} ${\|\mathbf{X}\mathbf{W} - \mathbf{X}\widehat{\mathbf{W}}\|_F^2}$ subject to a structural constraint, where $\mathbf{X}$ contains input activations and $\mathbf{W}$ and $\widehat{\mathbf{W}}$ are the dense and compressed weights, and $\|\cdot\|_F$ is the Frobenius norm. \emph{Pruning} (pure sparsity) zeroes out the weights in an unstructured pattern or in hardware-friendly $n$:$m$ patterns that keep $n$ nonzeros in each contiguous block of $m$ weights; the common $2$:$4$ case is natively accelerated by recent NVIDIA sparse Tensor Cores~\citep{zhou2021learning}. Sparse-plus-low-rank (SLR) compression, denoted in this paper by \slr, combines a sparse backbone $\bfS$ with a low-rank correction $\bfL$ under the same framework, improving compression-utility tradeoffs over either component alone~\citep{zhang2024oats,makni2025hasslefree,makni20263basil}.

The \emph{layerwise-reconstruction} procedure above matches each layer's outputs given its input activations and is therefore \emph{gradient-blind} to the LLM loss (see \cref{fig:curvature-matters}). LLM-Surgeon~\citep{ouderaa2024llmsurgeon} (pruning) and YAQA~\citep{tseng2026modelpreserving} (quantization) additionally incorporate downstream gradients from the calibration dataset. This curvature-aware objective improves model quality, but its optimization can require multiple GPU hours per model (LLM-Surgeon in \cref{tab:main-sweep}), limiting its adoption. Our solver minimizes the curvature-aware loss function more than an \emph{order of magnitude} faster than LLM-Surgeon while improving perplexity (\cref{tab:main-sweep}). It also extends beyond pure pruning to curvature-aware \slr compression. \Cref{fig:l8b-24-r256} studies Llama-3-8B with $2$:$4$ sparsity and a rank-256 correction, matching the \slr configuration with serving speed and memory gains reported by~\citet{mozaffari2024slope}.

\underline{\textbf{The second thread}} is \emph{efficient influence function (IF) estimation}. Influence functions, with early roots in robust statistics~\citep{hampel1974influence}, estimate the counterfactual effect of a training example via $\mathcal{I}(z, z') = \nabla\ell(z')^\top \mathcal{H}^{-1}\nabla\ell(z)$, where $z$ and $z'$ are training and query examples, $\nabla\ell$ is the per-example parameter gradient, and $\mathcal{H}$ is the training-loss Hessian. Applications include data attribution~\citep{tracin,park2023trak}, training-data selection~\citep{xia2024less}, model debugging and provenance~\citep{grosse2023studying}, and machine unlearning. At LLM scale, evaluating this expression is prohibitively expensive. Existing methods reduce the cost on two fronts: (i) inverse-Hessian approximations (Kronecker-Factored Approximate Curvature (KFAC)~\citep{martens2015kfac,grosse2023studying}, its eigenvalue-corrected variant (EK-FAC), identity-Hessian as in TracIn~\citep{tracin}, Adam-style preconditioners), and (ii) on the gradient side: random or architecture-aware gradient sketches (TRAK~\citep{park2023trak}, Low-rank Gradient Approximation (LoGRA)~\citep{choe2025whatlogra}); see \cref{sec:related-work}. These methods still require \emph{per-sample gradient computations}, which remain costly for large models and datasets.

IProX~\citep{chen2026influencepreservingiprox} introduced \emph{compressed-model proxies}: with the idea of computing influence estimates using a smaller surrogate model whose gradients mimic those of the dense model. They show that minimizing the curvature-aware objective, also used by LLM-Surgeon, helps preserve per-sample gradients. IProX instantiates this idea using low-rank proxies with a two-stage pipeline: (i) each weight matrix is approximated by a product of two smaller matrices, obtained through a closed-form weighted singular value decomposition (SVD), termed influence-preserving SVD (IPSVD); and (ii) gradient-alignment distillation further adapts the low-rank proxy.


In this paper, we study the connection between these \underline{\textbf{two different research threads}} for pruning. The bridge is the curvature-aware objective in \cref{eq:kfac-loss}, and the enabler is a single algorithm that makes it practical for both high-quality pruning and influence-preserving sparse proxies. Here, influence preservation means retaining the dense model's gradient-based influence estimates after compression.
This is the sense in which the paper offers ``Two for the Price of One.'' 

Our \textbf{main contributions} can be summarized as follows.

\begin{enumerate}[noitemsep,topsep=2pt,parsep=3pt,partopsep=3pt,leftmargin=*]
\item \textbf{Efficient Curvature-Aware Compression.} We introduce \ourmethod\footnote{\underline{\textbf{C}}urvature-\underline{\textbf{A}}ware \underline{\textbf{C}}ompression with \underline{\textbf{T}}ractable \underline{\textbf{U}}pdates via \underline{\textbf{S}}ylvester Steps: Our method uses iterative closed-form updates to find high-quality solutions for \cref{eq:kfac-loss}. One key update (smooth subproblem in ADMM) solves a Sylvester equation $\bfH\bfS\bfG + \tfrac{\rho}{2}\bfS = \text{RHS}$ in closed-form at matmul cost, hence the name of the algorithm--see more details in \cref{sec:algorithm}.}, a highly-efficient algorithm tailored for \cref{eq:kfac-loss} with convergence guarantees. It handles both pure pruning (\ourmethod-prune) and sparse-plus-low-rank (\ourmethod-SLR) constraints, and runs over an order of magnitude faster than LLM-Surgeon on substantially weaker hardware on Qwen3-4B. The updates are derived in \cref{sec:model-compression}. With this new unifying algorithm, we make contributions in model compression on two fronts: preserving downstream utility and influence function preservation.
\item \textbf{Compression Quality.}
On Llama-3.2-1B at $50\%$ compression, \ourmethod reduces the C4 perplexity gap to the dense model by $25\%$ for pure pruning and $28\%$ for \slr decompositions relative to the strongest corresponding baseline. Additionally, it maintains its advantage at different compression ratios, architectures, and model sizes (see \cref{tab:main-sweep,tab:main-sweep-appendix-tracin-q4b,tab:main-sweep-appendix-l3b-ppl,tab:main-sweep-appendix-l8b-ppl,tab:main-sweep-appendix-l8b-24-r256}). In particular, we show in \cref{tab:qwen38-27b-2to4} that \ourmethod reduces the C4 perplexity gap to dense by over $25\%$ compared to SparseGPT under $2{:}4$ sparsity for Qwen3.8-27B.
\item \textbf{Influence Preservation.} On Llama-3.2-1B at $50\%$ pruning, \ourmethod improves the cosine similarity between compressed and dense per-example gradients on WikiText-2 from $0.445$ (LLM-Surgeon) to $0.512$ (\cref{tab:main-sweep-appendix-tracin}). We then use pruned models with $50\%$  sparsity to select $5\%$ of data from an instruction-tuning mixture for fine-tuning a fresh Llama-3.2-3B model. \ourmethod selected subset overlaps the dense model's selection by $54.13\%$, compared with $52.24\%$ for ALPS, and yields $49.44\%$ versus $47.22\%$ BBH accuracy (\cref{tab:less-bbh-selection}). On Qwen3-4B, compressing MLP layers to $2{:}4$ sparsity with a rank-$64$ correction retains a cosine similarity of $0.711$ between compressed and dense per-example gradients on C4, measured using the same LoGRA projection.
\end{enumerate}

\section{Curvature-Aware Model Compression}
\label{sec:model-compression}
\subsection{Problem Formulation} \label{sec:problem-formulation}
We compress a pretrained LLM on a layer-by-layer basis. For a linear layer ${\bfW \in \R^{d_\text{in} \times d_\text{out}}}$, we use $N_{\text{cal}}$ calibration data points to introduce (i) its input activations $\bfX \in \R^{N_{\text{cal}} \times d_\text{in}}$, and (ii) its downstream gradients $\bfA \in \R^{N_{\text{cal}} \times d_\text{out}}$. We seek a compressed weight ${\bfWhat}$ satisfying a structural constraint ${\bfWhat \in \calC}$ (see below) that minimizes the
Kronecker-Factored Approximate Curvature (KFAC) loss:
\begin{equation}\label{eq:kfac-loss}
  \calL_\text{kfac}(\bfWhat) \;=\; \big\|\, \bfX \,(\bfWhat - \bfW)\, \bfA^\top \,\big\|_F^2
  \;=\; \Tr{\bfH \,(\bfWhat - \bfW)\, \bfG \,(\bfWhat - \bfW)^\top},
\end{equation}
where $\operatorname{Tr}$ denotes the matrix trace, $\bfH = \bfX^\top \bfX \in \R^{d_\text{in}\times d_\text{in}}$ is the input activations covariance and $\bfG = \bfA^\top \bfA \in \R^{d_\text{out}\times d_\text{out}}$ is the downstream gradients covariance. For efficiency, we collect $\bfG$ for \emph{all} layers at the start of the algorithm using a single forward-backward pass on the $N_{\text{cal}}$ calibration data points and store them on CPU. We then proceed to compress layers sequentially by constructing $\bfH$ from outputs of previously compressed layers. Compared to the statistics collected using a layerwise reconstruction error method (e.g., SparseGPT), we only add the collection of $\bfG$ for all layers \emph{once} at the start of the algorithm. We denote Eq~\eqref{eq:kfac-loss} by \emph{KFAC} loss because it can be derived using the Kronecker factorization approximation in two distinct ways
(i) a downstream compression quality perspective in LLM-Surgeon and (ii) influence function preservation perspective in IProX, see more details in \cref{app:derivation-diagram} (\cref{fig:derivation-convergence}). For compression, we consider two  structural constraints $\calC$:
\begingroup
\setlength{\leftmargini}{1.5em}
\begin{compactitem}
  \item \textbf{Pure pruning:} either unstructured sparsity $\calC_k = \{\bfS \in \R^{d_\text{in}\times d_\text{out}} : \|\bfS\|_0 \le k\}$, or $n{:}m$-structured sparsity $\calC_{n:m}$ that keeps at most $n$ non-zeros in each contiguous block of $m$ entries. Here, $\|.\|_0$ counts the number of non-zero elements. In \cref{alg:kfac-admm}, $\calC_S$ denotes either sparsity set.
  \item \textbf{Sparse + low rank \slr:}  $\calC_{\bfS+\bfL} = \calC_S + \calC_L$, i.e.\ $\bfWhat = \bfS + \bfL$ with simultaneous sparsity and rank budgets, where $\calC_L = \{\bfL : \rk{\bfL} \le r\}$ and $r$ is the rank budget.
\end{compactitem}
\endgroup
\begin{figure}[t]
\centering
\scalebox{0.75}{%
\begin{tikzpicture}[
    >=Latex,
    every node/.style = {inner sep=0pt, outer sep=0pt},
    ray/.style        = {line width=0.95pt, line cap=round,
                         -{Latex[length=3.4pt,width=2.8pt]}},
    grayray/.style    = {line width=0.8pt, line cap=round, opacity=0.45, dashed,
                         draw=black!45, -{Latex[length=3pt,width=2.5pt]}},
    meth/.style       = {draw=black!55, rounded corners=2.5pt, font=\scriptsize,
                         inner sep=3pt, fill=black!3, line width=0.5pt},
    methOur/.style    = {draw=kfacpurple!65!black, fill=kfacpurple!14,
                         rounded corners=2.5pt, font=\scriptsize\bfseries,
                         inner sep=3pt, text=kfacpurple!25!black, line width=0.7pt},
    card/.style       = {draw=black!30, rounded corners=7pt, line width=0.6pt,
                         fill=panelbg},
    cardOur/.style    = {draw=kfacpurple!55!black, rounded corners=7pt,
                         line width=0.95pt, fill=kfacpurple!4},
    progarr/.style    = {-{Latex[length=5pt,width=4pt]}, line width=1.2pt,
                         line cap=round}
]

\definecolor{kfacpurple}{RGB}{120,60,170}
\definecolor{modeldark}{RGB}{52,52,72}
\definecolor{optgreen}{RGB}{32,140,82}
\definecolor{panelbg}{RGB}{252,252,254}
\definecolor{databluebg}{RGB}{230,242,252}
\definecolor{datablue}{RGB}{40,110,190}
\definecolor{gradorangebg}{RGB}{253,236,218}
\definecolor{gradorange}{RGB}{210,95,35}
\definecolor{ghostgray}{RGB}{205,205,210}

\definecolor{rayA}{RGB}{214,40,40}
\definecolor{rayB}{RGB}{247,127,0}
\definecolor{rayC}{RGB}{212,175,55}
\definecolor{rayD}{RGB}{6,156,90}
\definecolor{rayE}{RGB}{35,116,201}
\definecolor{rayF}{RGB}{120,60,170}
\definecolor{rayG}{RGB}{202,52,146}

\node[anchor=center, font=\normalsize, align=center] at (8.9, 6.95)
   {The same $\mathbf W$ to be compressed -- 3 levels of data-awareness.};

\newcommand{\drawWmatrix}[1]{%
  \begin{scope}[shift={(#1)}]
    \def\cs{0.22}
    \foreach \i in {0,1,2,3,4}
      \foreach \j in {0,1,2,3,4}
        \fill[modeldark!85]
          (\i*\cs, \j*\cs) rectangle (\i*\cs+\cs*0.92, \j*\cs+\cs*0.92);
    \foreach \ii/\jj in {0/0, 1/2, 2/1, 0/3, 3/3, 4/2, 3/0, 1/4, 4/4, 2/3}
        \fill[black!8]
          (\ii*\cs, \jj*\cs) rectangle (\ii*\cs+\cs*0.92, \jj*\cs+\cs*0.92);
    \draw[draw=black!45, line width=0.5pt] (0,0) rectangle (5*\cs, 5*\cs);
    \node[anchor=north, font=\small\bfseries] at (0.5*5*\cs, -0.08) {$\mathbf W$};
  \end{scope}
}

%

\begin{scope}[shift={(0,0)}]
  \draw[card] (0, 1.15) rectangle (5, 6.3);

  \node[anchor=north, font=\footnotesize\bfseries] at (2.5,6.05) {Data-free};
  \node[anchor=north, font=\tiny\itshape, text=black!55] at (2.5,5.30)
     {$\times$\,$\mathbf X$\quad$\times$\,$\mathbf A$\;\; (weights only)};

  \draw[draw=ghostgray, fill=ghostgray!30, line width=0.5pt, dashed,
        rounded corners=1.5pt]
        (0.30, 3.70) rectangle (1.00, 5.10);
  \node[font=\small, text=black!45] at (0.65, 4.40) {$\mathbf X$};
  \node[anchor=north, font=\tiny\itshape, text=black!40] at (0.65, 3.68)
     {unused};

  \drawWmatrix{1.78, 3.85}

  \draw[draw=ghostgray, fill=ghostgray!30, line width=0.5pt, dashed,
        rounded corners=1.5pt]
        (3.55, 4.10) rectangle (4.65, 4.70);
  \node[font=\small, text=black!45] at (4.16, 4.41) {$\mathbf A^\top$};

  \foreach \y in {3.90, 4.15, 4.40, 4.65, 4.90}{
     \draw[grayray] (1.00, \y) -- (1.72, \y);
  }
  \foreach \y in {4.15, 4.28, 4.40, 4.52, 4.65}{
     \draw[grayray] (3.55, \y) -- (2.92, \y);
  }

  \node[anchor=north, font=\tiny\itshape, text=black!40] at (4.10, 3.68)
     {unused};

  \draw[rounded corners=3pt, draw=black!30, fill=black!4, line width=0.5pt]
     (0.40, 2.75) rectangle (4.60, 3.40);
  \node[anchor=center, font=\footnotesize] at (2.5, 3.075)
     {$\displaystyle \|\widehat{\mathbf W}-\mathbf W\|_F^2$};

  \draw[rounded corners=2.5pt, draw=black!40, fill=white, line width=0.5pt]
     (0.25, 1.75) rectangle (1.69, 2.40);
  \node[anchor=center, font=\tiny\bfseries, text=black!60]
     at (0.97, 2.25) {\textsc{s}};
  \node[anchor=center, font=\fontsize{3.5pt}{4.5pt}\selectfont, text=black!80]
     at (0.97, 1.95) {MP (magnitude pruning)};

  \draw[rounded corners=2.5pt, draw=black!40, fill=white, line width=0.5pt]
     (1.78, 1.75) rectangle (3.22, 2.40);
  \node[anchor=center, font=\tiny\bfseries, text=black!60]
     at (2.50, 2.25) {\textsc{l}};
  \node[anchor=center, font=\fontsize{3.5pt}{4.5pt}\selectfont, text=black!80]
     at (2.50, 1.95) {SVD};

  \draw[rounded corners=2.5pt, draw=black!40, fill=white, line width=0.5pt]
     (3.31, 1.75) rectangle (4.75, 2.40);
  \node[anchor=center, font=\tiny\bfseries, text=black!60]
     at (4.03, 2.25) {\textsc{s+l}};
  \node[anchor=center, font=\fontsize{3.5pt}{4.5pt}\selectfont, text=black!80]
     at (4.03, 1.95) {Robust-PCA};

  \node[anchor=south, font=\tiny\itshape, text=black!45]
     at (2.5, 1.35) {No calibration data};
\end{scope}

\draw[progarr, draw=datablue!70!black, line width=1.8pt]
   (5.20, 4.40) -- (6.20, 4.40);
\node[anchor=south, font=\small\bfseries, text=datablue!40!black]
     at (5.70, 4.55) {$+\,\mathbf X$};

\begin{scope}[shift={(6.4,0)}]
  \draw[card] (0, 1.15) rectangle (5, 6.3);

  \node[anchor=north, align=center, font=\footnotesize\bfseries] at (2.5,6.05)
     {Layerwise reconstruction\\[1pt]\footnotesize(most common)};
  \node[anchor=north, font=\tiny\itshape, text=black!55] at (2.5,5.30)
     {\checkmark\,$\mathbf X$\quad$\times$\,$\mathbf A$\;\; (gradient-blind)};

  \draw[draw=datablue!70!black, fill=databluebg,
        line width=0.55pt, rounded corners=1.5pt]
        (0.30, 3.70) rectangle (1.00, 5.10);
  \foreach \y in {0.20, 0.40, 0.60, 0.80, 1.00, 1.20} {
    \draw[draw=datablue!60, line width=0.35pt]
       (0.40, 3.70+\y) -- (0.90, 3.70+\y);
  }
  \node[font=\small\bfseries, text=datablue!30!black] at (0.65, 4.40)
     {$\mathbf X$};
  \node[anchor=north, font=\tiny\itshape, text=datablue!40!black]
     at (0.65, 3.68) {active};

  \drawWmatrix{1.78, 3.85}

  \draw[draw=ghostgray, fill=ghostgray!30, line width=0.5pt, dashed,
        rounded corners=1.5pt]
        (3.55, 4.10) rectangle (4.65, 4.70);
  \node[font=\small, text=black!45] at (4.16, 4.41) {$\mathbf A^\top$};
  \node[anchor=north, font=\tiny\itshape, text=black!40] at (4.10, 3.68)
     {unused};

  \foreach \ys/\yt in {3.80/3.90, 4.10/4.15, 4.40/4.40,
                        4.70/4.65, 4.95/4.80}{
     \draw[ray, draw=datablue!80!black]
       (1.00, \ys) .. controls (1.25, \ys) and (1.55, \yt) .. (1.76, \yt);
  }

  \foreach \y in {4.15, 4.28, 4.40, 4.52, 4.65}{
     \draw[grayray] (3.55, \y) -- (2.92, \y);
  }

  \draw[rounded corners=3pt, draw=datablue!55!black, fill=databluebg,
        line width=0.6pt]
     (0.40, 2.75) rectangle (4.60, 3.40);
  \node[anchor=center, font=\footnotesize] at (2.5, 3.075)
     {$\displaystyle
       \|{\color{datablue!30!black}\mathbf X}(\widehat{\mathbf W}-\mathbf W)\|_F^2$};

  \draw[rounded corners=2.5pt, draw=datablue!45!black,
        fill=databluebg!45, line width=0.5pt]
     (0.25, 1.75) rectangle (1.69, 2.40);
  \node[anchor=center, font=\tiny\bfseries, text=datablue!40!black]
     at (0.97, 2.25) {\textsc{s}};
  \node[anchor=center, align=center, font=\fontsize{3.5pt}{4.5pt}\selectfont, text=black!80]
     at (0.97, 1.95) {SparseGPT\\ALPS};

  \draw[rounded corners=2.5pt, draw=datablue!45!black,
        fill=databluebg!45, line width=0.5pt]
     (1.78, 1.75) rectangle (3.22, 2.40);
  \node[anchor=center, font=\tiny\bfseries, text=datablue!40!black]
     at (2.50, 2.25) {\textsc{l}};
  \node[anchor=center, font=\fontsize{3.5pt}{4.5pt}\selectfont, text=black!80]
     at (2.50, 1.95) {SVD-LLM};

  \draw[rounded corners=2.5pt, draw=datablue!45!black,
        fill=databluebg!45, line width=0.5pt]
     (3.31, 1.75) rectangle (4.75, 2.40);
  \node[anchor=center, font=\tiny\bfseries, text=datablue!40!black]
     at (4.03, 2.25) {\textsc{s+l}};
  \node[anchor=center, align=center, font=\fontsize{3.5pt}{4.5pt}\selectfont, text=black!80]
     at (4.03, 1.95) {OATS\\3BASiL};

  \node[anchor=south, font=\tiny\itshape, text=black!45]
     at (2.5, 1.35) {Uses Activations $\mathbf X$};
\end{scope}

\draw[progarr, draw=kfacpurple!70!black, line width=1.8pt]
   (11.60, 4.40) -- (12.60, 4.40);
\node[anchor=south, font=\small\bfseries, text=kfacpurple!40!black]
     at (12.10, 4.55) {$+\,\mathbf A$};

\begin{scope}[shift={(12.8,0)}]
  \draw[cardOur] (0, 1.15) rectangle (5, 6.3);

  \node[anchor=north, font=\footnotesize\bfseries, text=kfacpurple!25!black]
     at (2.5,6.05) {Curvature-aware};
  \node[anchor=north, font=\tiny\itshape, text=kfacpurple!45!black]
     at (2.5,5.30)
     {\checkmark\,$\mathbf X$\quad\checkmark\,$\mathbf A$\;\; (IF preserving)};

  \draw[draw=datablue!70!black, fill=databluebg,
        line width=0.55pt, rounded corners=1.5pt]
        (0.30, 3.70) rectangle (1.00, 5.10);
  \foreach \y in {0.20, 0.40, 0.60, 0.80, 1.00, 1.20} {
    \draw[draw=datablue!60, line width=0.35pt]
       (0.40, 3.70+\y) -- (0.90, 3.70+\y);
  }
  \node[font=\small\bfseries, text=datablue!30!black] at (0.65, 4.40)
     {$\mathbf X$};
  \node[anchor=north, font=\tiny\itshape, text=datablue!40!black]
     at (0.65, 3.68) {active};

  \drawWmatrix{1.78, 3.85}

  \draw[draw=kfacpurple!70!black, fill=kfacpurple!12,
        line width=0.55pt, rounded corners=1.5pt]
        (3.55, 4.10) rectangle (4.65, 4.70);
  \foreach \x in {0.18, 0.34, 0.50, 0.76, 0.92} {
     \draw[draw=kfacpurple!60, line width=0.35pt]
        (3.55+\x, 4.16) -- (3.55+\x, 4.64);
  }
  \node[font=\small\bfseries, text=kfacpurple!25!black] at (4.16, 4.41) {$\mathbf A^\top$};
  \node[anchor=north, font=\tiny\itshape, text=kfacpurple!40!black]
     at (4.10, 3.68) {active};

  \foreach \ys/\yt in {3.80/3.90, 4.10/4.15, 4.40/4.40,
                        4.70/4.65, 4.95/4.80}{
     \draw[ray, draw=datablue!80!black]
       (1.00, \ys) .. controls (1.25, \ys) and (1.55, \yt) .. (1.76, \yt);
  }

  \foreach \ya/\yw in {4.15/3.96, 4.28/4.18, 4.40/4.40, 4.52/4.62, 4.65/4.84}{
     \draw[ray, draw=kfacpurple!85!black, line width=0.8pt, opacity=0.92]
       (3.55, \ya) .. controls (3.30, \ya) and (3.10, \yw) .. (2.92, \yw);
  }

  \draw[rounded corners=3pt, draw=kfacpurple!55!black, fill=kfacpurple!10,
        line width=0.75pt]
     (0.40, 2.75) rectangle (4.60, 3.40);
  \node[anchor=center, font=\footnotesize] at (2.5, 3.075)
     {$\displaystyle
       \|{\color{datablue!30!black}\mathbf X}(\widehat{\mathbf W}-\mathbf W)
          {\color{kfacpurple}\mathbf A^{\!\top}}\|_F^2$};

  \draw[rounded corners=2.5pt, draw=kfacpurple!35!black,
        fill=kfacpurple!4, line width=0.5pt]
     (0.25, 1.75) rectangle (1.69, 2.40);
  \node[anchor=center, font=\tiny\bfseries, text=kfacpurple!50!black]
     at (0.97, 2.25) {\textsc{s}};
  \node[anchor=center, align=center, font=\fontsize{3.5pt}{4.5pt}\selectfont, text=black!75]
     at (0.97, 1.95) {LLM Surgeon\\ \textcolor{kfacpurple}{\ourmethod-prune}};

  \draw[rounded corners=2.5pt, draw=kfacpurple!35!black,
        fill=kfacpurple!4, line width=0.5pt]
     (1.78, 1.75) rectangle (3.22, 2.40);
  \node[anchor=center, font=\tiny\bfseries, text=kfacpurple!50!black]
     at (2.50, 2.25) {\textsc{l}};
  \node[anchor=center, font=\fontsize{3.5pt}{4.5pt}\selectfont, text=black!75]
     at (2.50, 1.95) {IProX -- IPSVD};

  \draw[rounded corners=2.5pt, draw=kfacpurple!70!black,
        fill=kfacpurple!18, line width=0.95pt]
     (3.31, 1.75) rectangle (4.75, 2.40);
  \node[anchor=center, font=\tiny\bfseries, text=kfacpurple!70!black]
     at (4.03, 2.25) {\textsc{s\,+\,l}};
  \node[anchor=center, font=\fontsize{3.5pt}{4.5pt}\selectfont, text=kfacpurple!85!black]
     at (4.03, 1.95) {\ourmethod-SLR};

  \node[anchor=south, font=\tiny\itshape, text=kfacpurple!40!black]
     at (2.5, 1.35) {Uses Activations $\mathbf X$ \emph{\&} Gradients $\mathbf A$};
\end{scope}

\end{tikzpicture}%
}
\caption{Each panel is a layer-level reconstruction problem centered on the same weight
matrix~$\mathbf W$.  Input activations from the previously compressed layer are represented by~${\mathbf X \in \R^{N_{\text{cal}} \times d_{\text{in}}}}$; the downstream gradients incoming from the next layer are represented by~${\mathbf A \in \R^{N_{\text{cal}} \times d_{\text{out}}}}$.
\emph{(Left)} \textbf{Data-free.}  Neither $\mathbf X$ nor $\mathbf A$ is
available; both sides reduce to parallel gray passes and the objective
collapses to ${\|\widehat{\mathbf W}-\mathbf W\|_F^2}$.
\emph{(Middle)} \textbf{Layerwise reconstruction (most common).}  Calibration
activations ${\mathbf X}$ light up the left side, giving
${\|\mathbf X(\widehat{\mathbf W}-\mathbf W)\|_F^2}$; outputs are matched but
the method remains \emph{gradient-blind} (SparseGPT, ALPS, 3BASiL, SVD-LLM).
\emph{(Right)} \textbf{Curvature-aware.}  Both activations and the
downstream gradients are used, yielding
${\|\mathbf X(\widehat{\mathbf W}-\mathbf W)\mathbf A^{\!\top}\|_F^2}$, as studied in this paper.}
\label{fig:curvature-matters}
\end{figure}

\vspace{-0.1em}
\subsection{\ourmethod Algorithm Derivation} \label{sec:algorithm}
\vspace{-0.1em}
We treat the \slr family as the general case, with pure pruning (obtained by setting rank $r = 0$) as a special case. We use the Alternating Direction Method of Multipliers (ADMM)~\citep{boyd2011distributed} for the Kronecker-Factored Approximate Curvature (KFAC) objective \eqref{eq:kfac-loss}.
Using variable splitting for the sparse component, 
our problem formulation is given as:
\begin{equation}\label{eq:splitting}
\begin{aligned}
  \min_{\bfS,\bfDelta,\bfL} \quad & \Tr{\bfH (\bfW - \bfS - \bfL) \bfG (\bfW - \bfS - \bfL)^\top} \\
  \text{s.t.}\quad & \bfS \;=\;\bfDelta, \qquad \bfDelta \in \calC_S, \quad \rk{\bfL} \le r,
\end{aligned}
\end{equation}
where $\bfDelta\in \calC_S$ is set to be sparse and $\bfS$ is its copy; $\bfL$ is the low-rank component with $\rk{\bfL} \le r$. The sparsity constraint is decoupled from the smooth objective through the consensus equality $\bfS = \bfDelta$. 
The augmented Lagrangian with scaled dual variable $\bfV$ and penalty $\rho > 0$ is
\begin{equation}\label{eq:admm-lagrangian}
  \calL_\rho(\bfS,\bfL,\bfDelta,\bfV) \;=\; \Tr{\bfH (\bfW - \bfS - \bfL) \bfG (\bfW - \bfS - \bfL)^\top} \;+\; \tfrac{\rho}{2} \big\| \bfS - \bfDelta + \bfV \big\|_F^2 \;-\; \tfrac{\rho}{2}\|\bfV\|_F^2.
\end{equation}
In practice, we damp $\bfH$ and $\bfG$ using separate ridge parameters $\lambda_H$ and $\lambda_G$ to improve conditioning (see \cref{sect:appendix-exp-details}). In what follows, $\bfH$ and $\bfG$ denote these damped factors. We perform an eigendecomposition of $\bfH$ and $\bfG$ once, then ADMM alternates across three primal blocks ($\bfS$, $\bfDelta$, $\bfL$)
one at a time followed by a dual update in $\bfV$. 
We minimize~\eqref{eq:admm-lagrangian} 
over each primal block $\bfS$, $\bfDelta\in\calC_S$ and $\rk{\bfL}\le r$ one at a time. Each primal subproblem minimization admits a closed-form solution 
and the $\bfS$- and $\bfL$-updates reuse the same eigenspaces throughout the ADMM iterations:
\begin{equation}\label{eq:h-g-eigen}
{
  \bfH \;=\; \bfQ_H \,\bm{\Lambda}_H\, \bfQ_H^\top, \qquad
  \bfG \;=\; \bfQ_G \,\bm{\Lambda}_G\, \bfQ_G^\top,
~~~~~~(\text{eigendecompositions)}}
\end{equation}
with $\bm{\Lambda}_H = \diag{\lambda_{H,1},\dots,\lambda_{H,d_\text{in}}}$ and $\bm{\Lambda}_G = \diag{\lambda_{G,1},\dots,\lambda_{G,d_\text{out}}}$.
\paragraph{(i) $\bfS$-update: Sylvester equation.}
Holding $\bfL,\bfDelta,\bfV$ fixed, the augmented Lagrangian \eqref{eq:admm-lagrangian} is a strictly convex quadratic in $\bfS$. Setting $\nabla_{\bfS}\calL_\rho = 0$ yields the matrix linear system
\begin{equation}\label{eq:s-update}
  \bfH\,\bfS\,\bfG \;+\; \tfrac{\rho}{2}\, \bfS \;=\; \bfH\,(\bfW - \bfL)\,\bfG \;+\; \tfrac{\rho}{2}\,\big(\bfDelta - \bfV\big),
\end{equation}
which is a \emph{generalized Sylvester equation} in $\bfS$ \citep{sylvester1909collected}. Crucially, this is what differentiates \ourmethod from prior ADMM-based layerwise reconstruction methods such as ALPS \citep{meng2024alps} or 3BASiL \citep{makni20263basil}: those methods only deal with a one-sided system $(\bfH+\tfrac{\rho}{2}\bfI)\bfS = \text{RHS}$ under our normalization, whereas the curvature-aware KFAC objective forces a \emph{two-sided} coupling that, naively solved, would cost $O((d_\text{in}d_\text{out})^3)$ via a vectorized Kronecker formulation $(\bfG^\top\!\!\otimes\bfH + \tfrac{\rho}{2}\bfI)\,\mathrm{vec}(\bfS) = \mathrm{vec}(\text{RHS})$, which is completely \textbf{intractable} for LLM-scale matrices.

\noindent\textbf{Closed-form solve in the two eigenbases.} The Sylvester structure admits an exact solution by substituting $\bfS = \bfQ_H \widetilde{\bfS} \bfQ_G^\top$ into \eqref{eq:s-update};
the equation \emph{decouples elementwise}:
\vspace{-0.4em}
\begin{equation}\label{eq:s-update-eigen}
  \big(\lambda_{H,i}\,\lambda_{G,j} + \tfrac{\rho}{2}\big)\,\widetilde{\bfS}_{ij}
  \;=\; \widetilde{\mathrm{RHS}}_{ij},
  \qquad
  \widetilde{\mathrm{RHS}} \;=\; \bfQ_H^\top\,\big[\bfH(\bfW-\bfL)\bfG + \tfrac{\rho}{2}(\bfDelta-\bfV)\big]\,\bfQ_G.
\end{equation}
\noindent We obtain each entry of $\widetilde{\bfS}$ by a single scalar division with $\lambda_{H,i}\lambda_{G,j}+\rho/2 > 0$---the $\bfS$-step remains nonsingular for any $\rho>0$ and positive definite $\bfH,\bfG$. The final solution is $\bfS = \bfQ_H\widetilde{\bfS}\bfQ_G^\top$.

\noindent\textbf{Algorithmic implications.} The whole inner solve thus reduces to (i) matmuls to form $\bfQ_H^\top(\cdot)\bfQ_G$, (ii) an elementwise division on a $d_\text{in}\!\times\!d_\text{out}$ matrix, and (iii) matmuls to undo the basis change. The smooth solve costs $O(d_\text{in}d_\text{out}(d_\text{in}+d_\text{out}))$ per iteration, with the eigenbases reused across all $T$ iterations and in the $\bfL$-update for whitening. The ability to solve \eqref{eq:s-update} \emph{exactly and at matmul cost} is key to making curvature-aware ADMM tractable and yields the runtimes reported in \cref{sec:experimental-results}.
\vspace{-0.4em}
\paragraph{(ii) $\bfL$-update: IPSVD \citep{chen2026influencepreservingiprox}.}
Holding $\bfS,\bfDelta,\bfV$ fixed, for the $\bfL$-update, we minimize
${\Tr{\bfH (\bfR_\bfS - \bfL) \bfG (\bfR_\bfS - \bfL)^\top}}$ (here, ${\bfR_\bfS \;=\; \bfW - \bfS}$) subject to $\rk{\bfL}\le r$. Since $\bfH$ and $\bfG$ are positive definite, the change of variables $\bfM=\bfH^{1/2}\bfL\bfG^{1/2}$ preserves rank and reduces the subproblem to ordinary low-rank approximation. Its exact minimizer is therefore
\begin{equation}\label{eq:l-update-eigen}
  \bfL^\star \;=\; \bfH^{-1/2}\, \mathrm{SVD}_r\!\big(\bfH^{1/2}\, \bfR_\bfS\, \bfG^{1/2}\big)\, \bfG^{-1/2},
\end{equation}
where $\mathrm{SVD}_r$ denotes a best rank-$r$ approximation obtained by truncated SVD, and $\bfH^{\pm 1/2} = \bfQ_H \bm{\Lambda}_H^{\pm 1/2} \bfQ_H^\top$ and $\bfG^{\pm 1/2} = \bfQ_G \bm{\Lambda}_G^{\pm 1/2} \bfQ_G^\top$ reuse eigendecompositions from \eqref{eq:h-g-eigen}.
We accept this candidate only when its movement is bounded by the smooth-block movement:
\begin{equation}\label{eq:cactus-lowrank-gate}
\bfL^{(t+1)}=
\begin{cases}
\bfL^\star, & \|\bfL^\star-\bfL^{(t)}\|_F\le C_L\|\bfS^{(t+1)}-\bfS^{(t)}\|_F,\\
\bfL^{(t)}, & \text{otherwise},
\end{cases}
\end{equation}
where $C_L\ge0$ is fixed and finite. For pure pruning ($r=0$), we keep $\bfL=\mathbf 0$.
\vspace{-1em}
\paragraph{(iii) $\bfDelta$-update: exact sparse projection.}
The only term involving $\bfDelta$ is the consensus quadratic, so we retain the top-$k$ entries by magnitude (unstructured) or the top-$n$ entries within each $m$-block ($n{:}m$-structured) and set the remaining elements to $0$. We denote this projection by $\Pi_{\calC_S}\!\big(\bfS + \bfV\big)$.
\vspace{-1em}
\paragraph{(iv) Scaled dual and penalty updates.}
Given the new penalty, we update $\bfV^{(t+1)}=\frac{\rho^{(t)}}{\rho^{(t+1)}}\big(\bfV^{(t)}+\bfS^{(t+1)}-\bfDelta^{(t+1)}\big)$. The ratio rescales the dual variable to preserve the updated Lagrange multiplier.

\begin{algorithm}[H]
\small
\caption{\ourmethod for one linear layer.}
\label{alg:kfac-admm}
\begin{algorithmic}[1]
\Require Pretrained $\bfW$, damped positive-definite factors $\bfH,\bfG$ (eigendecomposed), sparsity set $\calC_S$, rank $r$, movement bound $C_L$, positive penalties $\{\rho^{(t)}\}$, iterations $T$.
\State Initialize $\bfS^{(0)}\gets\bfW$, $\bfL^{(0)}\gets\mathbf0$, $\bfDelta^{(0)}\gets\mathbf0$, $\bfV^{(0)}\gets\mathbf0$.
\For{$t=0,1,\ldots,T-1$}
  \State $\bfS^{(t+1)}\gets$ Sylvester solve \eqref{eq:s-update} using $(\bfL^{(t)},\bfDelta^{(t)},\bfV^{(t)},\rho^{(t)})$.
    \State $\bfL^\star\gets$ weighted rank-$r$ SVD of $\bfW-\bfS^{(t+1)}$ via \eqref{eq:l-update-eigen}.
    \State $\bfL^{(t+1)}\gets\bfL^\star$ \textbf{if} $\|\bfL^\star-\bfL^{(t)}\|_F\le C_L\|\bfS^{(t+1)}-\bfS^{(t)}\|_F$ \textbf{else} $\bfL^{(t)}$.
  \State $\bfDelta^{(t+1)}\gets\Pi_{\calC_S}(\bfS^{(t+1)}+\bfV^{(t)})$.
  \State $\bfV^{(t+1)}\gets\frac{\rho^{(t)}}{\rho^{(t+1)}}\big(\bfV^{(t)}+\bfS^{(t+1)}-\bfDelta^{(t+1)}\big)$.
\EndFor
\State \Return $\bfWhat\gets\bfDelta^{(T)}+\bfL^{(T)}$.
\end{algorithmic}
\end{algorithm}
\vspace{-1.5em}
\subsection{Convergence}\label{sec:cactus-convergence}
\vspace{-0.5em}
The convergence argument follows 3BASiL~\citep{makni20263basil}. Under the conditions below, the iterates converge to a feasible sparse-plus-low-rank decomposition.
\begin{thm}[Convergence to a feasible decomposition]\label{thm:cactus-convergence}
Consider \cref{alg:kfac-admm} with positive definite $\bfH,\bfG$, exact smooth solves and sparse projections, and a fixed finite movement bound $C_L$. Suppose the positive penalties ($\rho_{t}$) are non-decreasing and satisfy $\sum_{t=0}^{\infty}1/\rho^{(t)}<\infty$. Then
\[
\bfS^{(t)},\bfDelta^{(t)}\longrightarrow\bar{\bfS}\in\calC_S,
\qquad
\bfL^{(t)}\longrightarrow\bar{\bfL},\quad \rk{\bar{\bfL}}\le r.
\]
The successive changes in $\bfS^{(t)}$ and $\bfL^{(t)}$, and the consensus residual $\bfS^{(t+1)}-\bfDelta^{(t+1)}$, are all $O(1/\rho^{(t-1)})$ in Frobenius norm.
\end{thm}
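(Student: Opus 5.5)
The strategy is the standard increasing-penalty ADMM argument used for ALPS and 3BASiL: show that every successive difference is bounded by a constant over the current penalty, then use summability of $1/\rho^{(t)}$ to get Cauchy sequences.

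\textbf{Step 1: uniform bound on the iterates.} We first bound the dual and primal blocks. Write $\bfU^{(t)}=\bfV^{(t)}+\bfS^{(t+1)}-\bfDelta^{(t+1)}$. Since $\bfDelta^{(t+1)}=\Pi_{\calC_S}(\bfS^{(t+1)}+\bfV^{(t)})$ keeps the largest-magnitude entries, $\bfU^{(t)}$ is exactly the discarded part of $\bfS^{(t+1)}+\bfV^{(t)}$. Hence $\|\bfU^{(t)}\|_F\le\|\bfS^{(t+1)}+\bfV^{(t)}\|_F$, and $\bfV^{(t+1)}=(\rho^{(t)}/\rho^{(t+1)})\bfU^{(t)}$ with ratio at most $1$. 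Next, the Sylvester step in the eigenbasis \eqref{eq:s-update-eigen} gives
\[
\widetilde{\bfS}_{ij}=\frac{\lambda_{H,i}\lambda_{G,j}}{\lambda_{H,i}\lambda_{G,j}+\rho/2}\,\widetilde{(\bfW-\bfL)}_{ij}+\frac{\rho/2}{\lambda_{H,i}\lambda_{G,j}+\rho/2}\,\widetilde{(\bfDelta-\bfV)}_{ij},
\]
so $\bfS^{(t+1)}$ is an entrywise convex combination, in the orthogonal eigenbasis, of $\bfW-\bfL^{(t)}$ and $\bfDelta^{(t)}-\bfV^{(t)}$. The deviation from $\bfDelta^{(t)}-\bfV^{(t)}$ is $O(1/\rho^{(t)})\cdot\|\bfH\|\|\bfG\|\,\|\bfW-\bfL^{(t)}-\bfDelta^{(t)}+\bfV^{(t)}\|_F$.

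We also bound $\bfL$. The gate gives $\|\bfL^{(t+1)}-\bfL^{(t)}\|_F\le C_L\|\bfS^{(t+1)}-\bfS^{(t)}\|_F$. Alternatively, any accepted $\bfL^\star$ is a weighted SVD of a bounded residual, so its norm is at most $\kappa(\bfH)^{1/2}\kappa(\bfG)^{1/2}\|\bfW-\bfS^{(t+1)}\|_F$.

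Combining these yields a linear recursion for $a_t=\|\bfS^{(t)}\|_F+\|\bfV^{(t)}\|_F+\|\bfDelta^{(t)}\|_F+\|\bfL^{(t)}\|_F$ of the form $a_{t+1}\le(1+c/\rho^{(t)})a_t+c'/\rho^{(t)}$. Since $\prod_t(1+c/\rho^{(t)})<\infty$ by summability, $a_t$ is uniformly bounded. I expect this to be the main obstacle. A naive bound on the projection gives $\|\bfDelta\|\le\|\bfS+\bfV\|$, which can double the norm at each step. To avoid this I would track the combined quantity $\bfS^{(t+1)}+\bfV^{(t)}$ rather than each piece separately: its nonexpansive split into $\bfDelta$ and $\bfU$ keeps $\|\bfDelta^{(t+1)}-\bfV^{(t+1)}\|_F\le\|\bfS^{(t+1)}+\bfV^{(t)}\|_F$ via orthogonality of disjoint supports. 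The paper follows 3BASiL here, and I would mirror their lemma.

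\textbf{Step 2: $O(1/\rho)$ differences.} With bounded iterates, the Sylvester formula gives $\bfS^{(t+1)}=\bfDelta^{(t)}-\bfV^{(t)}+O(1/\rho^{(t)})$. By the dual update, $\bfDelta^{(t)}-\bfV^{(t)}=\bfS^{(t)}-(\rho^{(t-1)}/\rho^{(t)})\bfV^{(t-1)}\cdot 0+\dots$. More cleanly, I would show that $\bfV^{(t)}$ itself is $O(1/\rho^{(t-1)})$. The first-order condition of the $\bfS$-step reads $\rho^{(t)}(\bfS^{(t+1)}-\bfDelta^{(t)}+\bfV^{(t)})=2\,\nabla$ of the smooth term, which is bounded. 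Feeding this into the projection and dual update gives $\|\bfV^{(t+1)}\|_F=O(1/\rho^{(t)})$, using that $\rho^{(t)}/\rho^{(t+1)}\le1$. This in turn gives $\|\bfS^{(t+1)}-\bfDelta^{(t+1)}\|_F=\|(\rho^{(t+1)}/\rho^{(t)})\bfV^{(t+1)}-\bfV^{(t)}\|_F=O(1/\rho^{(t-1)})$. Then $\|\bfS^{(t+1)}-\bfS^{(t)}\|_F\le\|\bfS^{(t+1)}-\bfDelta^{(t)}\|_F+\|\bfDelta^{(t)}-\bfS^{(t)}\|_F=O(1/\rho^{(t-1)})$, and the gate transfers this bound to $\bfL$.

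\textbf{Step 3: limits and feasibility.} By Step 2, $\sum_t\|\bfS^{(t+1)}-\bfS^{(t)}\|_F\lesssim\sum_t 1/\rho^{(t-1)}<\infty$, so $\bfS^{(t)}$ is Cauchy and converges to some $\bar\bfS$. The same argument shows $\bfL^{(t)}\to\bar\bfL$. Since the consensus residual tends to zero, $\bfDelta^{(t)}\to\bar\bfS$ as well. Because $\calC_S$ (unstructured or $n{:}m$) is closed and each $\bfDelta^{(t)}\in\calC_S$, the limit satisfies $\bar\bfS\in\calC_S$. Finally, every $\bfL^{(t)}$ has rank at most $r$ (whether it is an accepted $\bfL^\star$ or the previous iterate), and the rank-$\le r$ set is closed, so $\rk{\bar\bfL}\le r$.
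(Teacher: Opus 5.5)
Your architecture matches the paper's, which also defers to 3BASiL: bound the iterates by a recursion with coefficients $1+O(1/\rho)$, get $O(1/\rho)$ dual and increment bounds from the projection, then conclude by summability and closedness. Steps 2 and 3 go through once you state the inequality doing the work. Because $\bfDelta^{(t)}\in\calC_S$ is a competitor in the projection, $\|\bfU^{(t)}\|_F\le\|\bfR_t\|_F$ with $\bfR_t=\bfS^{(t+1)}-\bfDelta^{(t)}+\bfV^{(t)}$, and likewise $\|\bfDelta^{(t+1)}-\bfDelta^{(t)}\|_F\le 2\|\bfR_t\|_F$.

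The genuine gap is Step 1, the only nontrivial part. There the recursion is asserted rather than derived, and it does not follow for your potential $a_t=\|\bfS^{(t)}\|_F+\|\bfV^{(t)}\|_F+\|\bfDelta^{(t)}\|_F+\|\bfL^{(t)}\|_F$. The culprit is $\bfL$ carried with unit weight, because neither available bound on $\bfL^{(t+1)}$ gives an $O(1/\rho)$ increment of $a$.
\begin{itemize}
\item The size bound $\|\bfL^\star\|_F\le\kappa\|\bfW-\bfS^{(t+1)}\|_F$, with $\kappa=\|\bfH^{-1/2}\|_2\|\bfG^{-1/2}\|_2\|\bfH^{1/2}\|_2\|\bfG^{1/2}\|_2\ge 1$, is an $O(1)$ multiple of the potential. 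At an acceptance step you can therefore only show that $a$ grows by a constant factor.
\item The gate bound $C_L\|\bfS^{(t+1)}-\bfS^{(t)}\|_F$ is $O(1/\rho)$ only once you know that $\bfV^{(t)}$ and $\bfS^{(t)}-\bfDelta^{(t)}=\bfU^{(t-1)}-\bfV^{(t-1)}$ are small. That brings in $\bfR_{t-1}$ and $\bfR_{t-2}$, so it is not a one-step inequality in $a_t$.
\end{itemize}
The same history problem appears even without $\bfL$. The sum $\|\bfS^{(t+1)}\|_F+\|\bfDelta^{(t+1)}\|_F$ is controlled by $2\|\bfDelta^{(t)}\|_F$ plus small terms, and trading one copy for $\|\bfS^{(t)}\|_F$ again needs the previous step's residual.

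The missing device is the paper's choice of potential. Write $\bfY_t=\rho_t\bfV_t$, and let $C_F$ be a constant dominating $1$, $\|\bfW\|_F$, $2\|\bfH\|_2\|\bfG\|_2$, $2\|\bfH\bfW\bfG\|_F$ and $\|\bfH^{\pm1/2}\|_2\|\bfG^{\pm1/2}\|_2$.
\begin{itemize}
\item Take $a_t=\|\bfDelta_t\|_F+(\|\bfY_t\|_F+\|\bfY_{t-1}\|_F)/\rho_{t-1}$. Storing the previous dual gives $\|\bfS_t\|_F\le a_t$ via $\bfS_t=\bfDelta_t+(\bfY_t-\bfY_{t-1})/\rho_{t-1}$.
\item Take $b_t=\max\{a_t,\|\bfL_t\|_F/C_F^3-1\}$, which keeps $\bfL$ out of the additive part. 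An accepted $\bfL_t$ satisfies $\|\bfL_t\|_F\le C_F^3(1+a_t)$, and a rejected one is inherited from $b_{t-1}$.
\end{itemize}
Then $\bfL$ affects growth only through $\|\bfR_t\|_F\le(C_F+C_F^4)(1+b_t)/\rho_t$, where its constant multiplies $1/\rho_t$. Hence $\|\bfY_{t+1}\|_F\le(C_F+C_F^4)(1+b_t)$. With $\|\bfY_t\|_F/\rho_t\le\|\bfY_t\|_F/\rho_{t-1}$ (monotone penalties), you get $a_{t+1}\le a_t+3(C_F+C_F^4)(1+b_t)/\rho_t$. Therefore
\[
b_{t+1}\le\max\{a_{t+1},b_t\}\le\bigl(1+3(C_F+C_F^4)/\rho_{t-1}\bigr)b_t+3(C_F+C_F^4)/\rho_{t-1},
\]
which is bounded by summability.

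Your disjoint-support observation is correct and useful. It yields $\|\bfDelta^{(t+1)}-\bfV^{(t+1)}\|_F\le\|\bfDelta^{(t)}-\bfV^{(t)}\|_F+\|\bfR_t\|_F$, a clean substitute for the $\bfDelta$/dual part of this bookkeeping. You still need the max device, or an equivalent running maximum, to handle $\bfL$.
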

The two-sided curvature changes the smooth solve, but preserves the projection and boundedness arguments used in 3BASiL; \cref{app:cactus-convergence} gives only the changes to that proof. Increasing $\rho$ by a fixed factor greater than one at fixed intervals satisfies the summability condition (see \cref{sect:appendix-exp-details}).


\vspace{-0.4em}
\section{Experimental Results}
\vspace{-0.4em}
\label{sec:experimental-results}
\subsection{Experimental Setup}\label{sect:exp-setup}
\noindent\textbf{Models and Evaluation Protocol.}~~
We compute gradients using the language-model cross-entropy loss.
We evaluate \ourmethod on different settings and configurations using one of the following model architectures: Llama-3.2-1B/3B, Llama-2-7B, Llama-3-8B \citep{dubey2024llama}, Qwen3-4B ~\citep{yang2025qwen3} or Qwen3.8-27B. For model compression utility, we report C4~\citep{JMLR:v21:20-074} and WikiText-2 (WT-2) \citep{merity2017pointer} perplexity (PPL) at sequence length $2048$. Further, we evaluate influence function (IF) preservation under two gradient representations, since influence methods differ mainly in the inner product used to compare per-sample gradients. For every sample $x$ and linear layer $\ell$, let ${\nabla_{x,\ell}\in\R^{d_{\ell,\text{in}}\times d_{\ell,\text{out}}}}$ denote the per-sample weight gradient. The \textbf{TracIn} representation~\citep{tracin} uses the raw single-checkpoint gradient vector ${r_x^{\mathrm{T}}=\mathrm{concat}_{\ell}\,\mathrm{vec}(\nabla_{x,\ell})}$. The \textbf{KFAC-weighted} representation~\citep{mlodozeniec2024influence,grosse2023studying} uses ${r_x^{\mathrm{K}}=\mathrm{concat}_{\ell}\,\mathrm{vec}(\nabla^{\mathrm{K}}_{x,\ell})}$ with ${\nabla^{\mathrm{K}}_{x,\ell}=\bfH_\ell^{-1/2}\nabla_{x,\ell}\bfG_\ell^{-1/2}}$, where $\bfH_\ell$ and $\bfG_\ell$ are the dense model's activation and backpropagated-gradient covariance factors. We use the dense factors for both dense and compressed gradients: they are calibration-level curvature statistics, distinct from per-sample gradients, and can be precomputed once then applied on the fly. Before projection, the KFAC-weighted layerwise inner product is ${\langle\nabla^{\mathrm{K}}_{x,\ell},\nabla^{\mathrm{K}}_{q,\ell}\rangle_F = \mathrm{vec}(\nabla_{x,\ell})^\top(\bfG_\ell^{-1}\!\otimes\!\bfH_\ell^{-1})\mathrm{vec}(\nabla_{q,\ell})}$, corresponding to the standard KFAC approximation of the Hessian.

For either representation ${r_x\in\{r_x^{\mathrm{T}},r_x^{\mathrm{K}}\}}$, we store ${\tilde\nabla_x=\Pi r_x}$ using the same TRAK~\citep{park2023trak} Rademacher Johnson-Lindenstrauss (JL) sketch $\Pi$ down to $d=8192$. This projection preserves the relevant geometry: ${\langle \Pi r_x,\Pi r_q\rangle \approx \langle r_x,r_q\rangle}$ and $\|\Pi r_x\|\approx\|r_x\|$, so projected inner products and cosine similarities approximate their full-gradient counterparts. In what follows, ${\tilde\nabla^{\text{d}}_x, \tilde\nabla^{\text{c}}_x \in \R^d}$ denote the dense and compressed projected gradients of the same sample $x$ for a fixed representation, and ${\bfI^{\text{d}}, \bfI^{\text{c}} \in \R^{n_{\text{tr}}\times n_{\text{te}}}}$ denote each model's \emph{influence matrix} with entries ${\bfI_{ij} = \langle\tilde\nabla_{x_i},\tilde\nabla_{q_j}\rangle}$ (training row $i$, test column $j$). Following influence-function terminology, ``training'' rows are the source examples being ranked, while ``test'' columns are the query examples (distinct from a train/test split). The four metrics below are defined once on ${\tilde\nabla}$, and \emph{each is reported twice}: TracIn and KFAC-weighted.
\noindent Cosine metrics measure per-example gradient-direction fidelity before any cross-example influence score is formed:
\begin{equation*}
  {\cos_{\text{tr}} = \tfrac{1}{n_{\text{tr}}}\sum_{i=1}^{n_{\text{tr}}}\!\frac{\langle\tilde\nabla^{\text{d}}_{x_i},\tilde\nabla^{\text{c}}_{x_i}\rangle}{\|\tilde\nabla^{\text{d}}_{x_i}\|\,\|\tilde\nabla^{\text{c}}_{x_i}\|}},\qquad
  {\cos_{\text{te}} = \tfrac{1}{n_{\text{te}}}\sum_{j=1}^{n_{\text{te}}}\!\frac{\langle\tilde\nabla^{\text{d}}_{q_j},\tilde\nabla^{\text{c}}_{q_j}\rangle}{\|\tilde\nabla^{\text{d}}_{q_j}\|\,\|\tilde\nabla^{\text{c}}_{q_j}\|}}.
\end{equation*}
\noindent Query-ranking metrics measure whether each query induces the same influence ordering over source examples: Spearman uses the full ordering, while recall checks overlap among the top $5\%$. Let ${T_k(v)}$ denote the indices of the $k=\lceil0.05n_{\text{tr}}\rceil$ largest entries of $v$; then
\begin{equation*}
  {\rho_{\text{qry}}=\tfrac{1}{n_{\text{te}}}\sum_{j=1}^{n_{\text{te}}}
  \mathrm{Spearman}\!\big(\bfI^{\text{d}}_{\cdot j},\bfI^{\text{c}}_{\cdot j}\big)},\qquad
  {R_{\text{qry@5\%}}=\tfrac{1}{n_{\text{te}}}\sum_{j=1}^{n_{\text{te}}}
  \frac{\big|T_k(\bfI^{\text{d}}_{\cdot j})\cap T_k(\bfI^{\text{c}}_{\cdot j})\big|}{k}}.
\end{equation*}
\noindent \textbf{Baselines.} We group baselines by constraints imposed on compressed weights: (i) \textbf{Pure pruning}: \texttt{SparseGPT}, \texttt{ALPS}, and \texttt{LLM-Surgeon}; (ii) \textbf{Pure low-rank}: \texttt{SVD-LLM(W)}, \texttt{IPSVD}, and \texttt{IProX}; and (iii) \textbf{Sparse + low-rank \slr}: \texttt{OATS}, \texttt{3BASiL}. IPSVD is the closed-form solution of the weighted SVD in the $\bfL$-update, which is also the first stage of IProX. For \slr runs, we use rank $r_\ell = \lfloor \eta(1-c)\,m_\ell n_\ell/(m_\ell+n_\ell) \rfloor$ for target compression $c$, where $m_\ell$ and $n_\ell$ are the layer dimensions and $\eta$ is the fraction of retained parameters allocated to low rank; we use $\eta=0.3$, a value recommended by OATS. Additional implementation details are provided in \cref{sect:appendix-exp-details}.
\subsection{One-shot Compression Results}
We calibrate all methods on $128$ sequences of length $2048$ from C4's first shard and evaluate the four IF metrics on $n_{\text{tr}}=1000$ separate C4 training samples and $n_{\text{te}}=100$ WikiText-2 test samples, each of length $512$. \Cref{tab:main-sweep-appendix-tracin,tab:main-sweep,tab:main-sweep-appendix-l3b-ppl,tab:main-sweep-appendix-l8b-ppl} report results for Llama-3.2-1B, Qwen3-4B, Llama-3.2-3B, and Llama-3-8B, respectively. At $50\%$ compression on Llama-3.2-1B, \ourmethod reduces the C4 perplexity gap to dense by $28\%$ over the closest \slr method and $25\%$ over LLM-Surgeon(5sh), with \ourmethod-prune running ${>}15\times$ faster than the latter (\cref{tab:main-sweep-appendix-tracin}).
\begin{wraptable}[9]{r}{0.39\textwidth}
\centering
\small
\setlength{\tabcolsep}{5pt}
\begin{tabular}{lrr}
\toprule
Method & C4 $\downarrow$ & WT-2 $\downarrow$ \\
\midrule
\rowcolor{gray!10}
Dense & 11.27 & 7.05 \\
SparseGPT & 14.17 & 9.47 \\
\rowcolor{darkgreen!10}
\ourmethod-prune & \textbf{13.42} & \textbf{8.88} \\
\bottomrule
\end{tabular}
\caption{\footnotesize Qwen3.8-27B ppl at $2{:}4$ sparsity. 
}
\label{tab:qwen38-27b-ppl-main}
\end{wraptable}
It also retains substantially better downstream model quality than IProX's low-rank proxy, which achieves 102.61 WikiText-2 perplexity, against 14.17 for \ourmethod-prune. \Cref{fig:q4b-50pct} shows the C4 perplexity comparison on Qwen3-4B at $50\%$ compression. Under the hardware-friendly $(2{:}4{+}r256)$ configuration (1.32$\times$ inference speedup and $29\%$ memory reduction on Llama-3-8B with specialized kernels by~\cite{mozaffari2024slope}), \ourmethod-SLR improves per-query TracIn retention by $38\%$ over the closest competitor (\cref{fig:l8b-24-r256}). These results support important joint improvements in model quality and IF preservation; we next examine downstream data selection and integration with LoGRA projections.

\noindent\textbf{Discussion.} Better perplexity need not imply better gradient preservation. At $50\%$ Qwen3-4B compression, 3BASiL improves WikiText-2 perplexity over \ourmethod-prune ($15.34$ vs.\,$16.27$), but has lower dense-gradient cosine alignment ($0.743$ vs.\,$0.762$; \cref{tab:main-sweep-appendix-tracin-q4b}).

\begin{table}[t]
\centering
\scriptsize
\setlength{\tabcolsep}{3.2pt}
\newcolumntype{C}[1]{>{\centering\arraybackslash}p{#1}}
\resizebox{0.85\textwidth}{!}{%
\begin{tabular}{l|C{3.0em}C{3.0em}|C{3.6em}C{3.6em}C{3.6em}C{3.6em}|C{3.6em}C{3.6em}C{3.6em}C{3.6em}|C{4.2em}}
\toprule
& \cellcolor{gray!10}\textbf{C4}
& \cellcolor{gray!10}\textbf{WT-2}
& \multicolumn{4}{c|}{\textbf{TracIn} ($\uparrow$)}
& \multicolumn{4}{c|}{\textbf{KFAC} ($\uparrow$)}
& \multirow{2}{*}{\shortstack{\textbf{Compress}\\\textbf{Runtime}}} \\
\cmidrule(lr){4-7}\cmidrule(lr){8-11}
\textbf{Method}
& \cellcolor{gray!10}\textbf{PPL} ($\downarrow$)
& \cellcolor{gray!10}\textbf{PPL} ($\downarrow$)
& $\cos_{\text{tr}}$ & $\cos_{\text{te}}$ & $\rho_{\text{qry}}$ & $R_{\text{qry@5\%}}$
& $\cos_{\text{tr}}$ & $\cos_{\text{te}}$ & $\rho_{\text{qry}}$ & $R_{\text{qry@5\%}}$
& \\
\midrule
\rowcolor{gray!10} \textbf{Dense} & 19.85 & 13.66 & 1.000 & 1.000 & 1.000 & 1.000 & 1.000 & 1.000 & 1.000 & 1.000 & --- \\
\midrule[\heavyrulewidth]
\multicolumn{12}{c}{\cellcolor{blue!8}\textbf{30\% compression}} \\
\midrule
\multicolumn{12}{l}{\cellcolor{blue!4}\emph{Pure pruning}} \\
SparseGPT          & 21.05 & 14.53 & 0.852 & 0.862 & 0.959 & 0.746 & 0.739 & 0.750 & 0.498 & 0.292 & 24m 53s \\
ALPS               & 20.44 & \textbf{13.84} & 0.892 & 0.903 & 0.977 & 0.820 & 0.794 & 0.805 & 0.595 & 0.375 & 43m 07s \\
LLM-Surgeon(5sh)   & 20.44 & 14.33 & 0.910 & 0.915 & 0.971 & 0.829 & \textbf{0.825} & \textbf{0.836} & \textbf{0.630} & 0.373 & 18h 41m$^{\dagger}$ \\
\rowcolor{darkgreen!10}
\texttt{CACTUS-prune}
                   & \textbf{20.21}
                   & 13.87
                   & \textbf{0.920}
                   & \textbf{0.923}
                   & \textbf{0.984}
                   & \textbf{0.837}
                   & 0.816
                   & 0.816
                   & 0.624
                   & \textbf{0.388} & 45m 59s \\
\midrule
\multicolumn{12}{l}{\cellcolor{blue!4}\emph{Pure low-rank}} \\
SVD-LLM(W)         & 47.31 & 70.50 & 0.178 & 0.183 & 0.309 & 0.103 & 0.075 & 0.079 & 0.018 & 0.075 & 54m 43s \\
IPSVD              & \textbf{37.79} & \textbf{60.98} & \textbf{0.378} & \textbf{0.383} & \textbf{0.545} & \textbf{0.230} & \textbf{0.141} & \textbf{0.140} & \textbf{0.036} & \textbf{0.081} & 51m 00s \\
\midrule
\multicolumn{12}{l}{\cellcolor{blue!4} \slr} \\
OATS               & 20.13 & 13.96 & 0.892 & 0.889 & 0.969 & 0.748 & 0.790 & 0.788 & 0.588 & 0.360 & 2h 13m \\
3BASiL             & 20.15 & \underline{\textbf{13.72}} & 0.922 & 0.928 & 0.980 & 0.819 & 0.857 & 0.858 & 0.698 & 0.441 & 3h 03m \\
\rowcolor{darkgreen!10}
\texttt{CACTUS}-SLR
                   & \underline{\textbf{20.06}}
                   & 13.73
                   & \underline{\textbf{0.937}}
                   & \underline{\textbf{0.945}}
                   & \underline{\textbf{0.987}}
                   & \underline{\textbf{0.859}}
                   & \underline{\textbf{0.866}}
                   & \underline{\textbf{0.871}}
                   & \underline{\textbf{0.715}}
                   & \underline{\textbf{0.460}} & 3h 50m \\
\midrule[\heavyrulewidth]
\multicolumn{12}{c}{\cellcolor{blue!8}\textbf{50\% compression}} \\
\midrule
\multicolumn{12}{l}{\cellcolor{blue!4}\emph{Pure pruning}} \\
SparseGPT          & 24.43 & 18.04 & 0.656 & 0.660 & 0.874 & 0.551 & 0.387 & 0.388 & 0.138 & 0.128 & 24m 07s \\
ALPS               & 22.60 & 16.40 & 0.709 & 0.710 & 0.895 & 0.585 & 0.458 & 0.455 & 0.190 & \textbf{0.150} & 43m 24s \\
LLM-Surgeon(5sh)   & 23.50 & 19.80 & 0.680 & 0.670 & 0.864 & 0.561 & 0.397 & 0.383 & 0.140 & 0.125 & 20h 13m$^{\dagger}$ \\
\rowcolor{darkgreen!10}
\texttt{CACTUS-prune}
                   & \textbf{21.85}
                   & \textbf{16.27}
                   & \textbf{0.762}
                   & \textbf{0.755}
                   & \textbf{0.927}
                   & \textbf{0.697}
                   & \textbf{0.493}
                   & \textbf{0.474}
                   & \textbf{0.211}
                   & 0.149 & 42m 18s \\
\midrule
\multicolumn{12}{l}{\cellcolor{blue!4}\emph{Pure low-rank}} \\
SVD-LLM(W)         & 176.58 & 213.73 & 0.113 & 0.118 & 0.193 & 0.052 & 0.007 & 0.009 & 0.013 & 0.047 & 53m 15s \\
IPSVD              & \textbf{94.19} & \textbf{167.49} & \textbf{0.146} & \textbf{0.151} & \textbf{0.354} & \textbf{0.143} & \textbf{0.030} & \textbf{0.032} & \textbf{0.021} & \textbf{0.062} & 53m 27s \\
\midrule
\multicolumn{12}{l}{\cellcolor{blue!4} \slr} \\
OATS               & 23.97 & 16.73 & 0.652 & 0.653 & 0.804 & 0.508 & 0.380 & 0.383 & 0.140 & 0.130 & 2h 05m \\
3BASiL             & 21.63 & \underline{\textbf{15.34}} & 0.743 & 0.747 & 0.919 & 0.695 & 0.521 & 0.521 & 0.252 & \underline{\textbf{0.187}} & 2h 37m \\
\rowcolor{darkgreen!10}
\texttt{CACTUS}-SLR
                   & \underline{\textbf{21.38}}
                   & 15.59
                   & \underline{\textbf{0.782}}
                   & \underline{\textbf{0.790}}
                   & \underline{\textbf{0.937}}
                   & \underline{\textbf{0.698}}
                   & \underline{\textbf{0.539}}
                   & \underline{\textbf{0.538}}
                   & \underline{\textbf{0.259}}
                   & 0.173 & 2h 54m \\
\bottomrule
\end{tabular}%
}
\vspace{3pt}
\caption{One-shot compression of Qwen3-4B at $30\%$ and $50\%$ compression ratio. \textbf{Bold} = best within its constraint set; \underline{\textbf{underline}} = best overall across all reported methods. Runtime results are reported using an L40 GPU; $^{\dagger}$ denotes LLM-Surgeon runtimes on two NVIDIA A100 GPUs. More results are deferred to Appendix~\ref{app:if-ci}.}
\label{tab:main-sweep}
\end{table}
\begin{figure}[t]
\centering
\begin{minipage}[t]{0.49\textwidth}
\centering
\includegraphics[width=\textwidth]{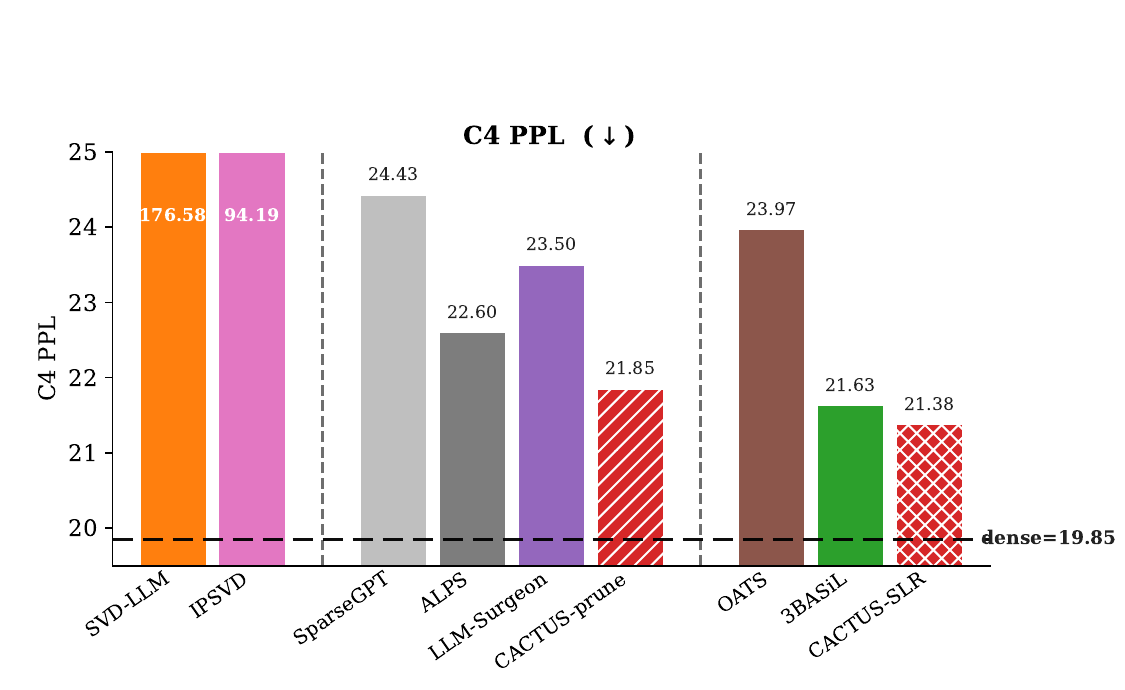}
\caption{\small Qwen3-4B at matched $50\%$ compression.}
\label{fig:q4b-50pct}
\end{minipage}
\hfill
\begin{minipage}[t]{0.49\textwidth}
\centering
\includegraphics[width=\textwidth]{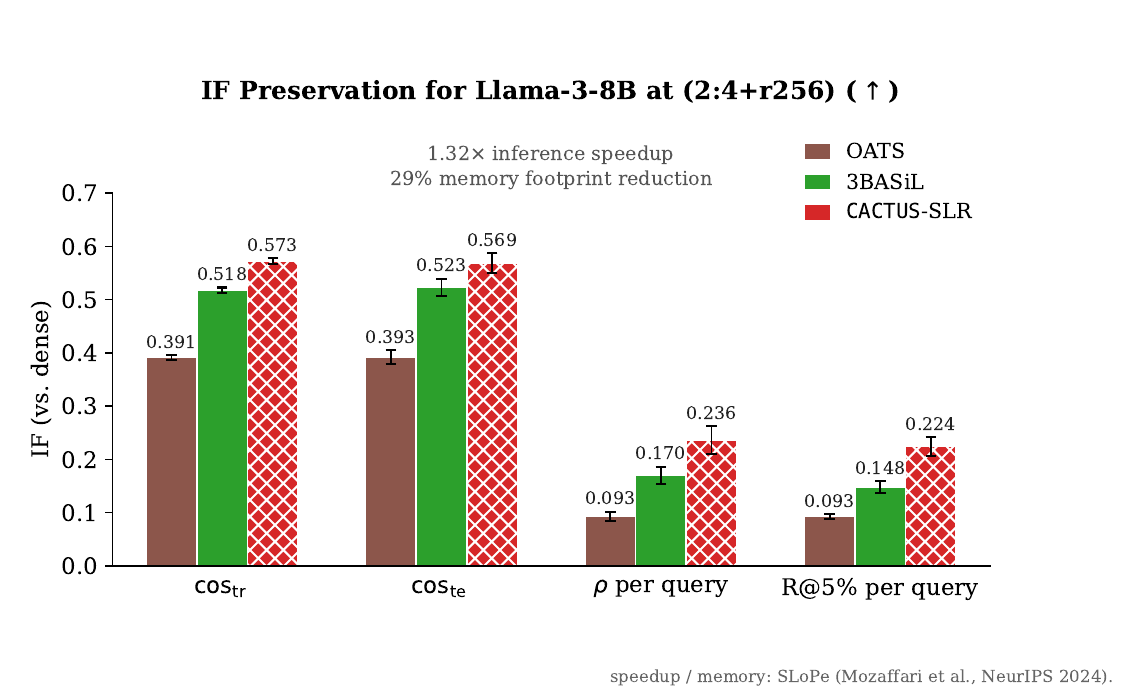}
\caption{\small Llama-3-8B at $(2{:}4{+}r256)$.}
\label{fig:l8b-24-r256}
\end{minipage}
\end{figure}
\subsection{Data Selection Results}
\label{sect:data-selection}
\noindent Having established the large gains of \ourmethod in terms of efficiency compared to LLM-Surgeon as well as utility and IF preservation compared to layerwise reconstruction pruning methods (e.g., SparseGPT), we now evaluate whether IF preservation also benefits downstream data selection using the LESS~\citep{xia2024less} pipeline on BIG-Bench Hard (BBH)~\citep{suzgun2023challenging}.
For Llama-2-7B and Llama-3.2-3B, we compare the pruning methods SparseGPT, ALPS, and \ourmethod-prune at matched layerwise budgets, using dense-model and random selection as references. Each scoring model uses the same LESS procedure, with optimizer-aware gradients and TRAK projections, to select $5\%$ of the $270,679$ example LESS instruction-tuning mixture comprising FLAN-v2, CoT, Dolly, and OpenAssistant. 
\begin{wraptable}[12]{r}{0.48\textwidth}
\centering
\setlength{\tabcolsep}{3pt}
\resizebox{\linewidth}{!}{%
\begin{tabular}{l ccc ccc}
\toprule
& \multicolumn{3}{c}{Llama-2-7B} & \multicolumn{3}{c}{Llama-3.2-3B} \\
\cmidrule(lr){2-4}\cmidrule(lr){5-7}
Selector & $\rho_{\mathrm{sel}}\uparrow$ & Overlap $\uparrow$ & EM $\uparrow$ & $\rho_{\mathrm{sel}}\uparrow$ & Overlap $\uparrow$ & EM $\uparrow$ \\
\midrule
Dense LESS & $1.000$ & $100.00$ & $42.59$ & $1.000$ & $100.00$ & $49.54$ \\
\midrule[0.3pt]
\rowcolor{darkgreen!10}
\ourmethod-prune & $\mathbf{0.638}$ & $\mathbf{47.20}$ & $41.57$ & $\mathbf{0.678}$ & $\mathbf{54.13}$ & $\mathbf{49.44}$ \\
ALPS & $0.594$ & $44.07$ & $40.46$ & $0.621$ & $52.24$ & $47.22$ \\
SparseGPT & $0.587$ & $39.70$ & $\mathbf{41.67}$ & $0.598$ & $49.60$ & $46.02$ \\
\midrule
Random $5\%$ & --- & $4.45$ & $39.81$ & --- & $4.94$ & $48.89$ \\
\bottomrule
\end{tabular}}
\caption{\footnotesize LESS selection. $\rho_{\mathrm{sel}}$: Spearman correlation of aggregate scores. Overlap: selected examples shared with dense selected data. EM: BBH acc (\%).}
\label{tab:less-bbh-selection}
\end{wraptable}
We then fine-tune a fresh dense model on each subset using the same training recipe as LESS and evaluate exact match (EM) on $1080$ held-out examples across $27$ BBH tasks with 3-shot chain-of-thought (CoT) prompting. Additional setup details are deferred to \cref{sect:appendix-exp-details}. \ourmethod achieves the highest score-rank correlation $0.638$ against the closest competitor $0.594$ and selected-set overlap with dense-selection at $47.20\%$ compared with $44.07\%$ for the better competitor. It also achieves the best BBH EM on Llama-3.2-3B $49.44$ compared with $47.22$ for ALPS.

\vspace{-0.4em}
\subsection{LoGRA Integration}
\vspace{-0.4em}
\noindent In the previous subsections, we examined the performance of \ourmethod using TRAK to sketch full per-sample gradients to $d=8192$ dimensions. We now examine its compatibility with another gradient projection strategy: Low-rank Gradient Approximation (LoGRA)~\citep{choe2025whatlogra}. LoGRA projects layerwise activations and backpropagated gradients without materializing the full per-sample gradient. It attaches a random rank-$r$ probe $(\bfP_o,\bfP_i)$ to each tracked linear layer and records a projected per-sample gradient ${\bm\Phi_\ell(x){=}(\bfP_o^{\!\top}g_{x,\ell})(\bfP_i x_\ell)^{\!\top}\in\R^{r\times r}}$, where $x_\ell$ and $g_{x,\ell}$ are the layer input and backpropagated gradient for sample $x$, and $\bfP_i$ and $\bfP_o$ are the fixed input and output projection matrices. Recall that the full per-sample gradient of the layer is given by $g_{x,\ell} x_\ell^{\!\top}$.

We treat Dense+LoGRA influence scores as the oracle and compare compressed models (also using the same LoGRA projection probes) against this reference. All rows use the same rank-$64$ random LoGRA factors on the MLP linears (default LoGRA setting). The \textbf{(middle)} panel shows the $2{:}4$ pruning-only case, where the layer is compressed and then we append the same rank-$r$ probe as the dense version of the \textbf{(left)} panel. 
For sparse-plus-low-rank models, we incorporate the low-rank correction into the LoGRA adapter while preserving exactly the same gradient projection used for the dense and pruned models. This allows a direct comparison under identical projection settings.
\begin{figure}[H]
\vspace{-0.3em}
\centering
\def\LograMatrixFont{\fontsize{22}{26}\selectfont} 
\def\LograLabelFont{\fontsize{12}{14}\selectfont}  
\centering
\resizebox{0.90\textwidth}{!}{%
\begin{tikzpicture}[
  >=Stealth,
  line cap=round,
  line join=round,
  font=\sffamily,
  flow/.style={-{Stealth[length=2.8mm,width=2.2mm]}, line width=0.78pt},
  route/.style={flow, rounded corners=3pt},
  module/.style={
    draw=moduleborder, fill=moduleblue, rounded corners=2.5pt,
    line width=0.55pt, minimum width=3.45cm, minimum height=4.45cm,
    align=center
  },
  projector/.style={draw=opborder, fill=opfill, line width=0.55pt},
  zero/.style={
    draw=zeroborder, fill=zerofill, rounded corners=2pt,
    line width=0.55pt, minimum width=1.18cm, minimum height=1.18cm,
    align=center
  },
  core/.style={
    draw=zeroborder, rounded corners=1.5pt, line width=0.55pt,
    minimum width=0.82cm, minimum height=0.74cm, align=center
  },
  groupbox/.style={draw=black, fill=none, rounded corners=3pt, line width=0.65pt},
  caption/.style={font=\LograLabelFont\ttfamily\bfseries, align=center},
  tinylabel/.style={font=\LograLabelFont\bfseries}
]

\definecolor{moduleblue}{HTML}{DEE8FB}
\definecolor{moduleborder}{HTML}{93A9CF}
\definecolor{opfill}{HTML}{BFE8EB}
\definecolor{opborder}{HTML}{4B91A0}
\definecolor{zerofill}{HTML}{E0D7E6}
\definecolor{zeroborder}{HTML}{9A77AF}
\definecolor{flowgreen}{HTML}{4F7D25}
\definecolor{flowred}{HTML}{C73A27}
\definecolor{flowpurple}{HTML}{9641F8}

\def\LograPanel#1#2#3#4{%
\begin{scope}[shift={(#1,0)}]
  \node[module] (M) at (1.72,3.35) {\LograMatrixFont #2};

  \path[projector]
    (4.02,5.58) -- (7.32,5.58) -- (6.18,4.76) -- (5.16,4.76) -- cycle;
  \node at (5.67,5.15) {\LograLabelFont $\mathbf{P_o}$};

  \path[projector]
    (5.16,1.94) -- (6.18,1.94) -- (7.32,1.12) -- (4.02,1.12) -- cycle;
  \node at (5.67,1.50) {\LograLabelFont $\mathbf{P_i}$};

  \node[zero] (Z) at (5.67,3.35) {\LograLabelFont $\mathbf{0}$};

  \node[tinylabel, flowgreen] at (1.30,7.02) {$x_o$};
  \node[tinylabel, flowred] at (1.86,7.02) {$D x_o$};
  \node[tinylabel, flowgreen] at (1.31,-0.32) {$x_i$};
  \node[tinylabel, flowred] at (1.87,-0.32) {$D x_i$};

  \draw[flow, flowgreen] (1.31,5.60) -- (1.31,6.76);
  \draw[flow, flowred]   (1.82,6.76) -- (1.82,5.60);
  \draw[flow, flowgreen] (1.31,-0.10) -- (1.31,1.08);
  \draw[flow, flowred]   (1.82,1.08) -- (1.82,-0.10);

  \draw[route, flowgreen]
    (5.67,5.62) -- (5.67,6.12) -- (1.31,6.12);
  \draw[route, flowred]
    (1.82,6.34) -- (6.08,6.34) -- (6.08,5.58);
  \draw[route, flowgreen]
    (1.31,0.46) -- (5.67,0.46) -- (5.67,1.12);
  \draw[route, flowred]
    (6.08,1.12) -- (6.08,0.26) -- (1.82,0.26);

  \draw[flow, flowgreen] (5.50,3.94) -- (5.50,4.76);
  \draw[flow, flowred]   (5.92,4.76) -- (5.92,3.94);
  \draw[flow, flowgreen] (5.50,1.94) -- (5.50,2.76);
  \draw[flow, flowred]   (5.92,2.76) -- (5.92,1.94);

  \node[tinylabel, flowpurple, anchor=center] at (3.35,7.55) {#3};

  \node[caption] at (3.35,-1.10) {#4};
\end{scope}
}

\def\LograSLRPanel#1#2#3#4{%
\begin{scope}[shift={(#1,0)}]
  \node[module] (M) at (1.72,3.35) {\LograMatrixFont #2};

  \path[projector, draw=moduleborder, fill=moduleblue]
    (4.02,5.58) -- (6.42,5.58) -- (5.60,4.86) -- (4.84,4.86) -- cycle;
  \node at (5.22,5.22) {\LograLabelFont $\mathbf{V^\star}$};

  \path[projector]
    (6.68,5.58) -- (9.08,5.58) -- (8.26,4.86) -- (7.50,4.86) -- cycle;
  \node at (7.88,5.22) {\LograLabelFont $\mathbf{P_o}$};

  \node[zero, draw=moduleborder, fill=moduleblue] at (5.22,3.35)
    {\LograLabelFont $\mathbf{\Sigma^\star}$};
  \node[zero] at (7.88,3.35) {\LograLabelFont $\mathbf{0}$};

  \path[projector, draw=moduleborder, fill=moduleblue]
    (4.84,1.84) -- (5.60,1.84) -- (6.42,1.12) -- (4.02,1.12) -- cycle;
  \node at (5.22,1.48) {\LograLabelFont $\mathbf{U^\star}$};

  \path[projector]
    (7.50,1.84) -- (8.26,1.84) -- (9.08,1.12) -- (6.68,1.12) -- cycle;
  \node at (7.88,1.48) {\LograLabelFont $\mathbf{P_i}$};

  \draw[groupbox] (3.88,4.70) rectangle (9.22,5.74);
  \draw[groupbox] (4.42,2.64) rectangle (8.68,4.06);
  \draw[groupbox] (3.88,0.98) rectangle (9.22,2.00);

  \node[tinylabel, flowgreen] at (1.30,7.02) {$x_o$};
  \node[tinylabel, flowred] at (1.86,7.02) {$D x_o$};
  \node[tinylabel, flowgreen] at (1.31,-0.32) {$x_i$};
  \node[tinylabel, flowred] at (1.87,-0.32) {$D x_i$};

  \draw[flow, flowgreen] (1.31,5.60) -- (1.31,6.76);
  \draw[flow, flowred]   (1.82,6.76) -- (1.82,5.60);
  \draw[flow, flowgreen] (1.31,-0.10) -- (1.31,1.08);
  \draw[flow, flowred]   (1.82,1.08) -- (1.82,-0.10);

  \draw[route, flowgreen]
    (5.22,5.58) -- (5.22,6.12) -- (1.31,6.12);
  \draw[route, flowgreen]
    (7.88,5.58) -- (7.88,6.12) -- (1.31,6.12);
  \draw[route, flowred]
    (1.82,6.48) -- (5.62,6.48) -- (5.62,5.58);
  \draw[route, flowred]
    (1.82,6.48) -- (8.28,6.48) -- (8.28,5.58);
  \draw[route, flowgreen]
    (1.31,0.46) -- (5.22,0.46) -- (5.22,1.12);
  \draw[route, flowgreen]
    (1.31,0.46) -- (7.88,0.46) -- (7.88,1.12);
  \draw[route, flowred]
    (5.62,1.12) -- (5.62,0.24) -- (1.82,0.24);
  \draw[route, flowred]
    (8.28,1.12) -- (8.28,0.24) -- (1.82,0.24);

  \draw[flow, flowgreen] (5.05,1.84) -- (5.05,2.76);
  \draw[flow, flowgreen] (5.05,3.94) -- (5.05,4.86);
  \draw[flow, flowgreen] (7.71,1.84) -- (7.71,2.76);
  \draw[flow, flowgreen] (7.71,3.94) -- (7.71,4.86);

  \draw[flow, flowred] (5.39,4.86) -- (5.39,3.94);
  \draw[flow, flowred] (5.39,2.76) -- (5.39,1.84);
  \draw[flow, flowred] (8.05,4.86) -- (8.05,3.94);
  \draw[flow, flowred] (8.05,2.76) -- (8.05,1.84);

  \node[tinylabel, flowpurple, anchor=center] at (5.35,7.55) {#3};

  \node[caption] at (3.70,-1.10) {#4};
\end{scope}
}

\LograPanel{0.45}
  {$\mathbf{W}$}
  {$\mathbf{P_g(W)}$}
  {\huge Standard LoGRA+Dense}

\LograPanel{10.55}
  {$\mathbf{W_{2:4}}$}
  {$\mathbf{P_g(W_{2:4})}$}
  {\huge LoGRA+\ourmethod-prune}

\LograSLRPanel{19.60}
  {$\mathbf{W_{2:4}}$}
  {$\mathbf{P_g(W_{2:4}+L^\star)}$}
  {\huge \hspace{2em}LoGRA+\ourmethod-SLR}

\draw[-{Stealth[length=4.2mm,width=3.2mm]}, line width=0.95pt]
  (8.18,3.35) -- (10.15,3.35);
\draw[-{Stealth[length=4.2mm,width=3.2mm]}, line width=0.95pt]
  (18.28,3.35) -- (19.25,3.35);
\end{tikzpicture}%
}
\end{figure}

The \textbf{(right)} panel illustrates this construction for \ourmethod-SLR, whose deployed weight is $\bfS^\star{+}\bfL^\star$. We decompose $\bfL^\star$ using SVD and concatenate its factors with the random LoGRA probes, widening the adapter to rank $2r$. The augmented branch contributes exactly $\bfL^\star$ to the forward pass, so the complete layer still computes with $\bfS^\star{+}\bfL^\star$; the random probe contributes zero to the output. After backpropagation, we obtain the per-sample gradient with respect to the adapter's central $2r\times2r$ matrix and retain only the $r\times r$ block associated with the original random LoGRA probe. 
\begin{wrapfigure}{r}{0.55\textwidth}
\centering
\resizebox{0.55\textwidth}{!}{%
\scriptsize
\setlength{\tabcolsep}{3pt}
\begin{tabular}{l|ccccc}
\toprule
\textbf{Method MLP-Layers}
 & $\cos_{\text{tr}}$ & $\cos_{\text{te}}$ & $\rho_{\text{qry}}$ & $R_{\text{qry}@5\%}$ & \textbf{Compression} \\
\midrule
\rowcolor{gray!10}\textbf{LoGRA\,+\,Dense (oracle)}                             & $1.000$ & $1.000$ & $1.000$ & $1.000$ & --- \\
LoGRA\,+\,SparseGPT       & $0.593$ & $0.489$ & $0.628$ & $0.398$ & 19m\,47s \\
LoGRA\,+\,LLM-Surgeon(1sh) & $0.630$ & $0.557$ & $0.708$ & $0.445$ & 4h\,02m$^{\dagger}$ \\
LoGRA\,+\,LLM-Surgeon(5sh) & $0.635$ & $0.570$ & $0.706$ & $0.499$ & 19h\,55m$^{\dagger}$ \\
\rowcolor{darkgreen!10}
LoGRA\,+\,\ourmethod-prune & $0.645$ & $0.584$ & $0.730$ & $0.583$ & 49m\,36s \\
\rowcolor{darkgreen!10}
LoGRA\,+\,\ourmethod-SLR & $\mathbf{0.711}$ & $\mathbf{0.639}$ & $\mathbf{0.804}$ & $\mathbf{0.597}$ & 1h\,33m \\
\bottomrule
\end{tabular}%
}
\caption{Influence Function preservation metrics for Qwen3-4B with MLP layers pruned using 2:4 sparsity. $^{\dagger}$ denotes runtimes that use 2 A100 GPUs; other reported compression runtimes use a single L40 GPU.}
\label{tab:logra-concat-q4b}
\end{wrapfigure}
This block is exactly the standard rank-$r$ LoGRA gradient evaluated on the deployed \slr model. On Qwen3-4B, \ourmethod-SLR improves all LoGRA-projected IF preservation metrics over pruning procedures, reaching a per-query $0.804$ Spearman and $0.597$ recall at $5\%$. \ourmethod-prune is competitive and uses a single L40 GPU against 2 A100 GPUs for LLM-Surgeon$^{\dagger}$ while running roughly $5\times$ and $25\times$ faster than the 1-shot and 5-shot variants, respectively.

\vspace{-0.4em}
\section{Related Work}
\vspace{-0.4em}
\label{sec:related-work}
\noindent\textbf{One-shot pruning and low-rank decompositions for LLMs.}
Post-training one-shot compression is a training-free paradigm popularized by SparseGPT~\citep{frantar2023sparsegpt}, which compresses a pretrained LLM layer-by-layer on a small calibration set by minimizing the layerwise reconstruction error $\|\mathbf{X}\mathbf{W}-\mathbf{X}\widehat{\mathbf{W}}\|_F^2$ subject to a structural constraint on $\widehat{\mathbf{W}}$; methods differ along two axes: the constraint family and the per-layer solver.
\emph{Pure sparsity}: SparseGPT~\citep{frantar2023sparsegpt} uses a sequential column-based sparsification scheme: after sparsifying one column, the remaining weights are updated to compensate for the incurred error, while ALPS~\citep{meng2024alps} formulates the same problem as a constrained quadratic problem solved by the Alternating Direction Method of Multipliers (ADMM)~\citep{boyd2011distributed}.
\emph{Pure low-rank}: SVD-LLM(W)~\citep{wang2025svdllm} uses a whitened Singular Value Decomposition (SVD) to minimize layerwise reconstruction error; ASVD~\citep{yuan2025asvd} instead scales input channels using activation statistics before SVD.
\emph{Sparse + low-rank} \slr: OATS~\citep{zhang2024oats} and Hassle-free~\citep{makni2025hasslefree} alternate a sparse fitting step with a low-rank projection step; OATS scores weights using a per-row activation-norm criterion whereas Hassle-free uses the full input correlation matrix. 3BASiL~\citep{makni20263basil} runs a multi-block ADMM that jointly minimizes the sparse and low-rank components simultaneously. Our method \ourmethod differs from all these layerwise reconstruction algorithms by incorporating downstream gradients in its objective, making it curvature-aware rather than \emph{gradient-blind}. This is best illustrated in \cref{fig:curvature-matters}.

\noindent\textbf{Curvature-Aware Compression.}
A complementary thread incorporates downstream sensitivity into model compression. LLM-Surgeon~\citep{ouderaa2024llmsurgeon} and IProX~\citep{chen2026influencepreservingiprox} arrive at the curvature-aware objective $\|\mathbf{X}(\mathbf{W}-\widehat{\mathbf{W}})\mathbf{A}^\top\|_F^2$ from different motivations: LLM-Surgeon approximates the change in model loss, whereas IProX seeks to preserve gradient information for influence estimation (\cref{app:derivation-diagram,fig:derivation-convergence}). LLM-Surgeon targets pure pruning through curvature-based weight selection and correlated corrections over multiple rounds; IProX constructs a pure low-rank proxy through a weighted SVD followed by gradient-alignment fine-tuning. YAQA~\citep{tseng2026modelpreserving} is a related approach which applies downstream curvature to weight-only quantization. Its curvature-based error feedback is conceptually closer to LLM-Surgeon's error compensation than to our ADMM splitting procedure. \ourmethod differs from these prior methods as it optimizes the curvature-aware loss for pruning and \slr using multi-block ADMM with efficient subroutines: a closed-form Sylvester solve in cached eigenbases, sparsity projections, and a randomized-SVD-based low-rank approximation (\cref{sec:algorithm}). On Qwen3-4B at $50\%$ compression, \ourmethod-prune is over $28\times$ faster than five-shot LLM-Surgeon$^{\dagger}$, using one L40 rather than two A100 GPUs (\cref{tab:main-sweep-appendix-tracin-q4b,sect:appendix-exp-details}). Our pruning and \slr representations also achieve substantially lower perplexity than IProX's aligned low-rank proxy (\cref{tab:main-sweep-appendix-tracin}).

\noindent\textbf{Efficient Influence Function Estimation.}
Influence functions, rooted in robust statistics \citep{hampel1974influence} and introduced to deep learning by \citet{koh2017understanding}, estimate a training example's effect via $\mathcal{I}(z,z') = \nabla\ell(z')^\top \mathbf{H}^{-1} \nabla\ell(z)$, supporting data valuation, instruction tuning, and attribution \citep{tracin,xia2024less,choe2025whatlogra,grosse2023studying,chen2026influencepreservingiprox}.
At LLM scale, both the per-sample gradient $\nabla\ell(z)$ and the inverse-Hessian-vector product $\mathbf{H}^{-1}\nabla\ell(z)$ are prohibitively expensive to compute and store exactly. We focus on two broadly complementary directions that make influence estimation tractable.
\textbf{(i) Compressed-model proxies} compute influence on smaller surrogates designed to preserve the target model's gradients. IProX~\citep{chen2026influencepreservingiprox} uses an influence-preserving SVD followed by gradient-alignment fine-tuning. \ourmethod extends compressed-model proxies to pruning and \slr through the same curvature-aware objective.
\textbf{(ii) Gradient projection} maps per-sample gradients to compact vectors for efficient similarity computation. TRAK~\citep{park2023trak} uses a \emph{data-free} Johnson Lindenstrauss (JL) sketch, while Arnoldi-IF~\citep{schioppa2022scaling} uses a \emph{data-dependent} approximation to the leading Hessian eigenspace. Low-rank Gradient Approximation (LoGRA)~\citep{choe2025whatlogra} exploits the per-layer gradient structure $\operatorname{vec}(\mathcal{D}\mathbf{W}) = \sum_t x_{i,t}\otimes\mathcal{D}x_{o,t}$ through a matching projection $P = P_i \otimes P_o$, implemented as small low-rank adaptation (LoRA) add-on layers.
Our work primarily contributes to direction \textbf{(i)}: where IProX is restricted to pure low-rank, \ourmethod delivers pruning and \slr proxies with improved utility-compression tradeoffs. We also evaluate compatibility with both TRAK's random projections and LoGRA's architecture-aware projections from direction \textbf{(ii)}.

\vspace{-0.4em}
\section{Conclusion and Limitations}
\vspace{-0.4em}
\label{sec:conclusion}

In this paper, we first highlight the connection between downstream utility and influence preservation through the curvature-aware objective introduced in \cref{eq:kfac-loss}. Our new algorithm \ourmethod makes optimizing this objective practical for modern LLM pruning as well as \slr decomposition tasks. It runs at a substantially lower cost compared to prior state-of-the-art curvature-aware pruning methods. 
Across various compression settings, model architectures, and gradient projection strategies, \ourmethod consistently improves downstream utility while retaining gradient fidelity.

\textbf{Limitations.} In this paper, we focus on pruning and sparse-plus-low-rank \slr compression. Extending our ADMM framework to quantization and mixed quantization-sparsity settings are directions for future work, especially when reparameterizations (e.g., rotations) change gradient geometry. We also only consider uniform compression budgets; it would be interesting to design specialized methods that can allocate sparsity and rank non-uniformly across the entire LLM layers. Additionally, we do not study post-compression refinements like full-model or LoRA-style finetuning. These may trade off deployment quality and gradient alignment fidelity in different ways. We leave these interesting follow-up research directions for future work.

\subsection*{AI use statement}
Generative AI assisted with code development, manuscript revision and polishing, and figure plotting and formatting. The authors are responsible for the final content and verification of the reported results.


\bibliographystyle{plainnat}
\bibliography{ref}

\clearpage
\newpage

\appendix

\section{The LLM-Surgeon and IProX Derivations}\label{app:derivation-diagram}

\Cref{fig:derivation-convergence} connects the two motivations for \cref{eq:kfac-loss}. For a layer $y=W^\top x$, let $\Delta W=\bfWhat-\bfW$ and $g=\nabla_y\ell(z)$, where $z$ is a sample and $\ell$ its loss. The input $x$ and gradient $g$ are evaluated at the dense model; $\mathcal L=\mathbb E_z[\ell(z)]$.

\begin{figure}[h!]
\centering
\resizebox{\textwidth}{!}{%
\begin{tikzpicture}[
    >=Latex,
    every node/.style       = {inner sep=0pt, outer sep=0pt},
    stepL/.style            = {rounded corners=5pt, draw=datablue!65!black,
                               line width=0.55pt, fill=databluebg!55,
                               align=center, inner sep=6pt,
                               font=\footnotesize, minimum width=5.2cm,
                               minimum height=1.1cm},
    stepR/.style            = {rounded corners=5pt, draw=ifteal!60!black,
                               line width=0.55pt, fill=iftealbg!55,
                               align=center, inner sep=6pt,
                               font=\footnotesize, minimum width=5.2cm,
                               minimum height=1.1cm},
    gateWarn/.style         = {rounded corners=3pt, draw=warnred!75!black,
                               line width=0.7pt, fill=warnredbg,
                               align=center, inner sep=4pt,
                               font=\footnotesize,
                               text=warnred!20!black,
                               minimum width=3.4cm},
    arr/.style              = {-{Latex[length=4pt, width=3.4pt]},
                               line width=0.9pt, draw=black!55,
                               line cap=round},
    arrL/.style             = {-{Latex[length=4pt, width=3.4pt]},
                               line width=1pt, draw=datablue!60!black,
                               line cap=round},
    arrR/.style             = {-{Latex[length=4pt, width=3.4pt]},
                               line width=1pt, draw=ifteal!55!black,
                               line cap=round},
    kmerge/.style           = {rounded corners=6pt, draw=black!55,
                               line width=0.85pt, fill=black!5,
                               align=center, inner sep=7pt,
                               font=\footnotesize, text=black!80,
                               minimum width=7.6cm, minimum height=1.6cm},
    share/.style            = {rounded corners=7pt, draw=kfacpurple!70!black,
                               line width=1pt, fill=kfacpurple!6,
                               align=center, inner sep=8pt}
]

\providecolor{datablue}{RGB}{40,110,190}
\providecolor{databluebg}{RGB}{230,242,252}
\providecolor{kfacpurple}{RGB}{120,60,170}
\definecolor{ifteal}{RGB}{20,140,125}
\definecolor{iftealbg}{RGB}{225,245,240}
\definecolor{warnred}{RGB}{200,40,45}
\definecolor{warnredbg}{RGB}{253,230,232}

\node[anchor=center, font=\normalsize, align=center] at (7.6, 10.90)
  {\textbf{One objective, two derivations}};

\node[anchor=center, font=\small\bfseries, text=datablue!25!black]
   at (2.6, 10.2) {LLM-Surgeon\, $\rhd$\, \textsc{model compression}};
\node[anchor=center, font=\small\bfseries, text=ifteal!25!black]
   at (12.6, 10.2){IProX (IPSVD stage)\, $\rhd$\, \textsc{influence preservation}};

\node[stepL] (L1) at (2.6, 9.10)
   {\textbf{Second-order Taylor Expansion}\\[2pt]
    Near a stationary point\\[2pt]
    $\displaystyle \Delta \mathcal L \;\approx\; \tfrac12\,
        (\Delta\theta)^{\!\top}\, \nabla_\theta^2\mathcal L\, (\Delta\theta)$};

\node[stepL] (L2) at (2.6, 6.70)
   {\textbf{Fisher substitution}\\[2pt]
    $\displaystyle \nabla_\theta^2\mathcal L \;\approx\; F \;=\; \mathbb{E}\!\left[
        \nabla_\theta\ell\,(\nabla_\theta\ell)^{\!\top}\right]$};

\node[stepL, minimum width=4.0cm, minimum height=1.5cm] (L3) at (2.6, 4.30)
   {\textbf{Block-diagonal approximation (per layer)}\\[4pt]
    \mbox{\begin{tikzpicture}[baseline=(bdm.base), scale=0.32]
       \node[inner sep=0pt] (bdm) at (2.5, 2.5) {};
       \draw[black!55, line width=0.45pt, rounded corners=1pt]
          (0,0) rectangle (5,5);
       \foreach \i in {0,1,2,3,4} {
         \pgfmathsetmacro{\y}{4-\i}
         \fill[datablue!35!white, draw=datablue!75!black, line width=0.4pt,
               rounded corners=0.6pt]
            (\i+0.15, \y+0.15) rectangle (\i+0.85, \y+0.85);
       }
     \end{tikzpicture}}\\[4pt]
    \scriptsize $F \;\approx\; \operatorname{blkdiag}(F_1, \ldots, F_L)$};

\draw[arrL] (L1.south) -- (L2.north);
\draw[arrL] (L2.south) -- (L3.north);

\node[gateWarn] at (2.6, 7.85)
   {Empirical Fisher surrogate};

\node[stepR, minimum width=6.8cm] (R1) at (12.6, 9.00)
   {\textbf{Preserve pairwise influence scores (per layer)}\\[3pt]
    $\displaystyle \min_{\widehat W}
       \mathbb E_{z,z'}\!\left[\left|I_{\widehat W}(z,z')-I_W(z,z')\right|\right]$\\[3pt]
    \scriptsize $I_W(z,z')=\langle\nabla_W\ell(z),\nabla_W\ell(z')\rangle_F$};

\node[stepR, minimum width=6.8cm] (R2) at (12.6, 6.65)
   {\textbf{Influence-distortion bound}\\[3pt]
    $\displaystyle \mathbb E_{z,z'}|I_{\widehat W}-I_W|
       \;\le\; C_\kappa\sqrt{\mathbb E_z[(g^\top\Delta W^\top x)^2]}$\\[3pt]};

\node[stepR, minimum width=6.8cm] (R3) at (12.6, 4.10)
   {\textbf{Minimize the surrogate}\\[3pt]
    $\displaystyle \min_{\widehat W}
       \mathbb E_z[(g^\top\Delta W^\top x)^2]$\\[3pt]
    \scriptsize $x,g$ evaluated at the dense model};

\draw[arrR] (R1.south) -- (R2.north);
\draw[arrR] (R2.south) -- (R3.north);

\node[kmerge] (KMERGE) at (7.6, 1.40)
   {\textbf{Per-layer Kronecker moment approximation}\\[4pt]
    $\displaystyle
      \mathbb{E}\!\left[gg^{\!\top}\otimes xx^{\!\top}\right]
      \;\approx\;
      \mathbb{E}[gg^{\!\top}] \otimes \mathbb{E}[xx^{\!\top}]
      \;\propto\; \bfG \otimes \bfH$};

\draw[arrL, line width=1.1pt, rounded corners=8pt]
   (L3.south) -- (2.6, 2.40) -- (KMERGE.west);
\draw[arrR, line width=1.1pt, rounded corners=8pt]
   (R3.south) -- (12.6, 2.40) -- (KMERGE.east);

\node[share] (SHARED) at (7.6, -0.5)
   {{\footnotesize\textbf{Shared curvature-aware objective}}\\[3pt]
    $\displaystyle
      \bigl\| \bfX\,(\bfWhat - \bfW)\,\bfA^{\!\top}\bigr\|_F^2
      \;=\;
      \Tr{\bfH\,(\bfWhat - \bfW)\,\bfG\,(\bfWhat - \bfW)^{\!\top}}$};

\draw[arr, line width=1.0pt] (KMERGE.south) -- (SHARED.north);

\end{tikzpicture}}
\caption{\textbf{One objective, two derivations.}
LLM-Surgeon~\citep{ouderaa2024llmsurgeon} and IProX's IPSVD stage~\citep{chen2026influencepreservingiprox} lead to \cref{eq:kfac-loss}. The influence-distortion bound uses the assumptions of IProX, Proposition 4.1 and Appendix D; $C_\kappa$ is the resulting finite constant. Here, $\bfH=\bfX^\top\bfX$, and $\bfG=\bfA^\top\bfA$.}
\label{fig:derivation-convergence}
\end{figure}

\clearpage

\section{Convergence of \ourmethod}\label{app:cactus-convergence}

\begin{proof}[{Proof of \cref{thm:cactus-convergence}}]
We follow 3BASiL~\citep[Appendix A]{makni20263basil}, identifying the variables used there $\widehat{\mathbf W},\mathbf D^{(t)},\mathbf V^{(t)}$ with our $\bfW,\bfDelta_t,\bfY_t=\rho_t\bfV_t$, respectively. The rescaled dual update gives $\bfY_{t+1}=\bfY_t+\rho_t(\bfS_{t+1}-\bfDelta_{t+1})$, exactly as in 3BASiL. Only the following two bounds change.

\paragraph{Low-rank bound (Lemma A.1).}
Replace 3BASiL's selection of the constant $C_F$ in Eq.~(9) as follows
\[
\max\big\{1,\|\bfW\|_F,\,2\|\bfH\|_2\|\bfG\|_2,\,2\|\bfH\bfW\bfG\|_F,
\|\bfH^{-1/2}\|_2\|\bfG^{-1/2}\|_2,
\|\bfH^{1/2}\|_2\|\bfG^{1/2}\|_2\big\}\le C_F.
\]
For an accepted candidate, \cref{eq:l-update-eigen} and $\|\mathrm{SVD}_r(\bfM)\|_F\le\|\bfM\|_F$ give
\[
\begin{aligned}
\|\bfL_t\|_F
&\le\|\bfH^{-1/2}\|_2\|\bfG^{-1/2}\|_2
      \|\bfH^{1/2}(\bfW-\bfS_t)\bfG^{1/2}\|_F\\
&\le C_F^2(\|\bfW\|_F+\|\bfS_t\|_F)
\le C_F^3(1+a_t),
\end{aligned}
\]
where, as in 3BASiL, we define
\[
a_t=\|\bfDelta_t\|_F+\frac{\|\bfY_t\|_F+\|\bfY_{t-1}\|_F}{\rho_{t-1}},
\qquad b_t=\max\left\{a_t,\frac{\|\bfL_t\|_F}{C_F^3}-1\right\}.
\]
The last inequality follows from $\bfS_t=\bfDelta_t+(\bfY_t-\bfY_{t-1})/\rho_{t-1}$. If the candidate is rejected, $\bfL_t=\bfL_{t-1}$, as in Lemma A.1.

\paragraph{Smooth-step bound (Lemma A.2).}
Set $\bfR_t=\bfS_{t+1}-\bfDelta_t+\bfY_t/\rho_t$. The calculation preceding Eq.~(23) is replaced by the Sylvester equation from \cref{eq:s-update}:
\[
2\bfH\bfR_t\bfG+\rho_t\bfR_t
=2\bfH(\bfW-\bfL_t-\bfDelta_t+\bfY_t/\rho_t)\bfG.
\]
In the two orthonormal eigenbases, each entry is divided by $2\lambda_{H,i}\lambda_{G,j}+\rho_t\ge\rho_t$. Thus Eqs.~(23) and (27) become
\[
\begin{aligned}
\|\bfR_t\|_F
&\le\frac{2}{\rho_t}
 \|\bfH(\bfW-\bfL_t-\bfDelta_t+\bfY_t/\rho_t)\bfG\|_F\\
&\le\frac{C_F(1+\|\bfDelta_t\|_F+\|\bfY_t\|_F/\rho_t)+C_F\|\bfL_t\|_F}{\rho_t}
\le\frac{(C_F+C_F^4)(1+b_t)}{\rho_t}.
\end{aligned}
\]
The projection inequality in Eq.~(24) then gives the bounds of Lemma A.2:
\[
\|\bfY_{t+1}\|_F\le(C_F+C_F^4)(1+b_t),
\qquad
\|\bfDelta_{t+1}-\bfDelta_t\|_F\le\frac{2(C_F+C_F^4)(1+b_t)}{\rho_t}.
\]

The proof of Lemma A.3 now gives the same recurrence:
\[
b_{t+1}\le\max\{a_{t+1},b_t\}
\le\left(1+\frac{3(C_F+C_F^4)}{\rho_{t-1}}\right)b_t
   +\frac{3(C_F+C_F^4)}{\rho_{t-1}}.
\]
The remainder of the 3BASiL argument is unchanged: \cref{eq:cactus-lowrank-gate} gives the same movement bound, and closedness of the constraint sets gives feasibility of the limits.
\end{proof}

\clearpage

\section{Additional Experimental Protocol Details}\label{sect:appendix-exp-details}
\paragraph{\ourmethod implementation.}
Before eigendecomposition, we apply separate ridge damping to the activation and gradient factors:
\begin{align*}
  \bfH &\leftarrow \bfH + \lambda_H\,\mathrm{mean}(\mathrm{diag}(\bfH))\bfI,\\
  \bfG &\leftarrow \bfG + \lambda_G\,\mathrm{mean}(\mathrm{diag}(\bfG))\bfI.
\end{align*}
Here $\lambda_H$ and $\lambda_G$ are damping parameters. The algorithm minimizes the two-sided objective in \cref{eq:kfac-loss} using these damped factors. We present the ppl results using different ridge parameters in \cref{tab:curvature-damping}. Compression seems to be robust to the ridge values used. For large layers/model architectures, $0.005$ can cause decomposition issues (poor conditioning). Except in the ridge-damping ablation, we use $\lambda_H=0.01$ (same as SparseGPT) and $\lambda_G=0.1$.
\begin{table}[H]
\centering
\small
\resizebox{0.4\textwidth}{!}{%
\begin{tabular}{r rrrr}
\toprule
& \multicolumn{4}{c}{$\lambda_G$} \\
\cmidrule(lr){2-5}
$\lambda_H$ & $0.005$ & $0.01$ & $0.1$ & $0.25$ \\
\midrule
\multicolumn{5}{c}{C4 perplexity ($\downarrow$)} \\
\midrule
$0.005$ & 19.729 & 19.647 & 19.448 & 19.449 \\
$0.01$ & 19.719 & 19.596 & 19.432 & \textbf{19.392} \\
$0.1$ & 19.626 & 19.601 & 19.536 & 19.395 \\
$0.25$ & 19.681 & 19.780 & 19.548 & 19.509 \\
\midrule
\multicolumn{5}{c}{WT-2 perplexity ($\downarrow$)} \\
\midrule
$0.005$ & 14.351 & 14.447 & 14.308 & 14.186 \\
$0.01$ & 14.346 & 14.302 & 14.168 & 14.070 \\
$0.1$ & 14.207 & 14.150 & 14.045 & 14.023 \\
$0.25$ & 14.218 & 14.195 & 14.097 & \textbf{13.972} \\
\bottomrule
\end{tabular}%
}
\caption{Ridge-damping ablation for $\bfH$ and $\bfG$ using next-token cross-entropy on Llama-3.2-1B at $50\%$ unstructured sparsity. Bold denotes the lowest perplexity within each dataset's grid.}
\label{tab:curvature-damping}
\end{table}


By default, our ADMM-based algorithm \ourmethod-slr uses $C_L=10$ for the low-rank movement condition in \cref{alg:kfac-admm}.

\begin{table}[H]
\centering
\small
\resizebox{0.6\textwidth}{!}{%
\begin{tabular}{r r r r r r}
\toprule
$C_L$ & Accept (\%) & Reject (\%) & Avg. iterations & C4 $\downarrow$ & WT-2 $\downarrow$ \\
\midrule
$\infty$ & 100.00 & 0.00 & 430 & 18.256 & 13.061 \\
$10$ (default) & 100.00 & 0.00 & 431 & 18.307 & 12.952 \\
$3$ & 100.00 & 0.00 & 429 & 18.290 & 13.021 \\
$1$ & 100.00 & 0.00 & 431 & 18.337 & 13.023 \\
$0.9$ & 99.99 & 0.01 & 430 & 18.293 & 12.909 \\
$0.8$ & 99.62 & 0.38 & 430 & 18.276 & 13.028 \\
$0.7$ & 97.47 & 2.53 & 429 & 18.320 & 13.014 \\
$0.6$ & 61.58 & 38.42 & 417 & 19.428 & 14.122 \\
$0.5$ & 22.03 & 77.97 & 407 & 24.511 & 18.527 \\
$0.4$ & 1.16 & 98.84 & 403 & 35.538 & 26.500 \\
$0.3$ & 0.00 & 100.00 & 403 & 36.723 & 27.272 \\
$0.1$ & 0.00 & 100.00 & 403 & 36.590 & 27.079 \\
\bottomrule
\end{tabular}}
\caption{Low-rank movement-threshold ablation: \ourmethod-\slr on Llama-3.2-1B at $50\%$ total compression with rank-budget allocation $\eta=0.3$. Rejection rates and iteration counts are averaged over all $112$ compressed sublayers.}
\label{tab:low-rank-movement}
\end{table}

We use an initial penalty $\rho_0=0.1$ and update the ADMM penalty using the support-change rule from ALPS/3BASiL. Let $s_t = |\operatorname{Supp}(\bfDelta^{(t)}) \triangle \operatorname{Supp}(\bfDelta^{(t-m)})|$ be the symmetric difference in sparse support at the update period $m$, and let $k$ be the number of retained nonzeros. The finite-run support-change schedules are summarized in \cref{tab:cactus-rho-schedule}. For the non-decreasing schedule analyzed in \cref{thm:cactus-convergence}, a multiplier of at least $\eta_{\min}=1.02$ at every fixed-period update gives $\rho_t\ge\rho_0\eta_{\min}^{\lfloor t/m\rfloor}$, so $\sum_{t=0}^{\infty}1/\rho_t\le m\eta_{\min}/[\rho_0(\eta_{\min}-1)]<\infty$.
\begin{table}[H]
\centering
\small
\resizebox{0.7\textwidth}{!}{%
\begin{tabular}{lccc}
\toprule
\textbf{Setting} & \textbf{Max iter.} & \textbf{Support-change condition} & \textbf{$\rho$ multipliers} \\
\midrule
Pure pruning & $100$ & $s_t/k>0.1,\;s_t/k>0.005,\;s_t>0.5$ & $1.3,\;1.2,\;1.1$ \\
\slr & $2000$ & $s_t/k>0.1,\;s_t/k>0.005,\;s_t>0.5$ & $1.1,\;1.05,\;1.02$ \\
\bottomrule
\end{tabular}
}
\caption{Support-change penalty multipliers for compression. At the update period $m$, the multiplier is selected according to which support-change condition holds: the first matching threshold sets the multiplier $\rho^{(t+1)}/\rho^{(t)}$. The \slr schedule is intentionally less aggressive as \slr decompositions seem to be more sensitive in our experiments.}
\label{tab:cactus-rho-schedule}
\end{table}

\paragraph{Baselines.}
All one-shot baselines are run in the same layerwise compression framework and with the same calibration samples, except when a method explicitly includes a post-compression stage. \texttt{SparseGPT}, \texttt{ALPS}, \texttt{SVD-LLM(W)} (simple activation-aware low-rank fitting without sequential factor refinement or post-compression fine-tuning), \texttt{OATS}, and \texttt{3BASiL} use activation-only objectives using $\bfH$, whereas \texttt{IPSVD}, \texttt{IProX}, \texttt{LLM-Surgeon}, and \ourmethod use $\bfG$ as well. For \slr baselines at target compression ratio $c$, we match total parameter budget using the same layerwise rank allocation $r_\ell=\lfloor \eta(1-c)m_\ell n_\ell/(m_\ell+n_\ell)\rfloor$ with $\eta=0.3$; for named structured settings such as $(2{:}4{+}r256)$, the sparsity pattern and rank are fixed exactly as specified.

For LLM-Surgeon, we use row-wise correlated weight updates (\texttt{max\_correlate=-1}), jointly accounting for the pruned weights within each row when computing weight compensation, for both unstructured and 2:4 pruning. We also employ the five-shot setting described in \cref{sect:exp-setup}, since the one-shot unstructured pruning on Llama3.2-1B produced weak results. For Qwen3-4B model under 2:4 sparsity on MLP layers, one-shot results are competitive, so we report both one-shot and five-shot results.

\paragraph{Compression Protocol.}
We estimate activation and gradient covariance factors from $128$ C4 calibration sequences at sequence length $2048$ (see effect of number of calibration data points at \cref{tab:calibration-size}). This is a standard calibration set introduced by SparseGPT~\citep{frantar2023sparsegpt} and adopted by most layerwise reconstruction algorithms like ALPS~\citep{meng2024alps}. We evaluate language model utility with C4 and WikiText-2 perplexity at sequence length $2048$. Influence-preservation metrics use projected per-example gradients (via TRAK's \texttt{fast\_jl}) with the same seeded projection. Thus, TracIn entries are computed as $\langle \Pi g_i,\Pi g_j\rangle$ under a fixed JL sketch $\Pi$, which preserves inner products and ensures dense and compressed gradients are compared in the same projected space.
The per-sample gradients are sampled from $n_{\text{tr}}=1000$ C4 training sequences (different from calibration samples) and $n_{\text{te}}=100$ WikiText-2 test sequences, both at sequence length $512$.

\paragraph{Data Selection Protocol.}
We follow the LESS~\citep{xia2024less} data-selection procedure on its released FLAN-v2, CoT, Dolly, and OpenAssistant mixture (270,679 candidates), comparing dense and compressed scoring models. Scorers share a $5\%$ warm-up sample  and train LoRA adapters on attention $q/k/v/o$ projections for four epochs. Warm-up and target training use learning rate $2\times10^{-5}$, effective batch size $128$, sequence length $2048$, cosine decay, and $3\%$ learning-rate warm-up. Similar to the LESS framework, we apply a shared $8192$-dimensional TRAK projection to Adam-aware candidate updates and ordinary query gradients in the LoRA parameter space.

In \cref{tab:less-bbh-selection}, Spearman compares aggregate scores on common valid candidates; overlap measures intersection with dense LESS divided by $13{,}533$ (this corresponds to $5\%$ of the entire dataset size). \Cref{fig:less-bbh-score-curve} plots 3B scores in dense-ranked order. Training on the authors' released 7B LESS subset yields $41.30\%$ BBH-EM.

\paragraph{LoGRA Integration Protocol.}
The setup in LoGRA Integration subsection reuses the standard one-shot calibration setup ($128$ C4 sequences at sequence length $2048$, $\bfH$ and $\bfG$ collected with the same damping values as the one-shot compression subsection) and compresses Qwen3-4B with each method restricted to MLP linears (\texttt{gate\_proj}, \texttt{up\_proj}, \texttt{down\_proj}). Hence, the attention $q/k/v/o$ remain dense. Sparsity is set to $2{:}4$; the \slr experiment additionally adds rank-$64$ low-rank corrections using \ourmethod-SLR on the same MLP layers. For \texttt{LLM-Surgeon}, we report both the 1-shot and 5-shot variants. The LoGRA influence-evaluation pipeline is identical for every row. We attach rank-$64$ LoGRA adapters to every tracked MLP linear (\texttt{gate\_proj}, \texttt{up\_proj}, and \texttt{down\_proj}) of the deployed compressed model with $\bfP_o\!\in\!\R^{d_{\text{out}}\times r}$ and $\bfP_i\!\in\!\R^{r\times d_{\text{in}}}$ drawn from a single shared seed (so all methods share the \emph{same} random subspaces and IF entries are pairwise comparable).

\textbf{Concat-LoGRA construction.} For \ourmethod-SLR, we widen each tracked MLP-linear adapter to rank $2r$ by concatenating the SVD basis of that linear layer's low-rank correction denoted by $\bfL^\star_\ell{=}\bfU^\star_\ell\bm\Sigma^\star_\ell(\bfV_\ell^\star)^\top$:
$\bfA_{\text{cat},\ell}{=}[\,\bfP_o\,|\,\bfV^\star_\ell\,]$,
$\bfB_{\text{cat},\ell}{=}[\,\bfP_i^\top\,|\,\bfU_\ell^\star\,]^\top$.
Forward and backward use the full rank-$2r$ adapter with $\bfE_{\text{cat},\ell}{=}\mathrm{blkdiag}(\mathbf 0,\bm\Sigma^\star_\ell)$; $\bfA_{\text{cat},\ell}\bfE_{\text{cat},\ell}\bfB_{\text{cat},\ell}=(\bfL^\star_\ell)^\top$, so this is forward-equivalent to the deployed weight $\bfS_\ell{+}\bfL_\ell$ since the random block contributes zero to the forward. We capture the $2r\times 2r$ per-sample gradient and \emph{slice the top-left $r\times r$ block} as the IF statistic, corresponding to a standard rank-$r$ LoGRA gradient (of size $r \times r$), so all downstream IF analysis is identical.

\clearpage

\section{Additional Experimental Results}\label{app:if-ci}
This appendix reports the four influence-preservation metrics introduced in \cref{sect:exp-setup} ($\cos_{\text{tr}}, \cos_{\text{te}}, \rho_{\text{qry}}, R_{\text{qry@5\%}}$) for all nine compression methods at three compression levels ($30\%, 50\%, 70\%$). Each cell is annotated as mean $\pm\,\widehat{\mathrm{SEM}}$: the sample standard deviation divided by the square root of the number of evaluation examples. \textbf{bold} denotes the best within a constraint family; \underline{\textbf{underline}} denotes the best overall at a given compression rate. Emphasis applies only to the mean, with ties at the reported precision marked equally. Emphasis applies only to the mean, with ties at the reported precision marked equally. Runtime entries marked with $^{\dagger}$ use two NVIDIA A100 GPUs; other one-shot compression runtimes use a single L40 GPU unless stated otherwise.

\begin{table}[H]
\centering
\scriptsize
\setlength{\tabcolsep}{1.5pt}

\newcolumntype{C}[1]{>{\centering\arraybackslash}p{#1}}
\resizebox{0.90\textwidth}{!}{%
\begin{tabular}{l|C{3.0em}C{3.0em}|C{7.1em}C{7.1em}C{7.1em}C{7.1em}|C{4.2em}}
\toprule
\textbf{Method}
& \cellcolor{gray!10}\textbf{C4 PPL} ($\downarrow$)
& \cellcolor{gray!10}\textbf{WT-2 PPL} ($\downarrow$)
& \multicolumn{4}{c|}{\textbf{TracIn} ($\uparrow$)}
& \multirow{2}{*}{\shortstack{\textbf{Compress}\\\textbf{Runtime}}} \\
\cmidrule(lr){4-7}
& \cellcolor{gray!10}
& \cellcolor{gray!10}
& $\cos_{\text{tr}}$ & $\cos_{\text{te}}$ & $\rho_{\text{qry}}$ & $R_{\text{qry@5\%}}$ & \\
\midrule
\rowcolor{gray!10} \textbf{Dense} & 14.01 & 9.75 & 1.000 & 1.000 & 1.000 & 1.000 & --- \\
\midrule[\heavyrulewidth]
\multicolumn{8}{c}{\cellcolor{blue!8}\textbf{30\% compression}} \\
\midrule
\multicolumn{8}{l}{\cellcolor{blue!4}\emph{Pure pruning}} \\
SparseGPT          & 15.94 & 10.92 & 0.737\,$\pm$\,0.005 & 0.730\,$\pm$\,0.018 & 0.524\,$\pm$\,0.014 & 0.293\,$\pm$\,0.007 & 10m 24s \\
ALPS               & 15.53 & 10.71 & 0.773\,$\pm$\,0.005 & 0.764\,$\pm$\,0.017 & 0.580\,$\pm$\,0.014 & 0.347\,$\pm$\,0.007 & 14m 48s \\
LLM-Surgeon(5sh)   & 15.18 & \textbf{10.48} & 0.817\,$\pm$\,0.005 & 0.802\,$\pm$\,0.017 & 0.644\,$\pm$\,0.015 & 0.391\,$\pm$\,0.008 & 4h 57m\\
\rowcolor{darkgreen!10}
\texttt{CACTUS-prune}
& \textbf{15.03} & 10.49 & \textbf{0.840}\,$\pm$\,0.005 & \textbf{0.824}\,$\pm$\,0.017 & \textbf{0.685}\,$\pm$\,0.015 & \textbf{0.428}\,$\pm$\,0.009 & 15m 22s \\
\midrule
\multicolumn{8}{l}{\cellcolor{blue!4}\emph{Pure low-rank}} \\
SVD-LLM(W)         & 135.87 & 184.95 & 0.029\,$\pm$\,0.001 & 0.029\,$\pm$\,0.003 & 0.013\,$\pm$\,0.004 & 0.047\,$\pm$\,0.003 & 18m 15s \\
IPSVD              & 72.01 & 91.87 & 0.098\,$\pm$\,0.002 & 0.096\,$\pm$\,0.004 & 0.028\,$\pm$\,0.005 & 0.050\,$\pm$\,0.003 & 18m 54s \\
IProX         & \textbf{53.05} & \textbf{81.46} & \textbf{0.173}\,$\pm$\,0.002 & \textbf{0.171}\,$\pm$\,0.005 & \textbf{0.067}\,$\pm$\,0.004 & \textbf{0.094}\,$\pm$\,0.004 & 36m 41s \\
\midrule
\multicolumn{8}{l}{\cellcolor{blue!4} \slr} \\
OATS               & 15.77 & 10.78 & 0.754\,$\pm$\,0.005 & 0.740\,$\pm$\,0.017 & 0.542\,$\pm$\,0.014 & 0.315\,$\pm$\,0.007 & 36m 46s \\
3BASiL             & 14.92 & 10.27 & 0.845\,$\pm$\,0.004 & 0.835\,$\pm$\,0.015 & 0.678\,$\pm$\,0.015 & 0.428\,$\pm$\,0.008 & 50m 07s \\
\rowcolor{darkgreen!10}
\texttt{CACTUS-slr}
& \underline{\textbf{14.64}} & \underline{\textbf{10.12}} & \underline{\textbf{0.892}}\,$\pm$\,0.004 & \underline{\textbf{0.875}}\,$\pm$\,0.015 & \underline{\textbf{0.766}}\,$\pm$\,0.014 & \underline{\textbf{0.525}}\,$\pm$\,0.010 & 59m 31s \\
\midrule[\heavyrulewidth]
\multicolumn{8}{c}{\cellcolor{blue!8}\textbf{50\% compression}} \\
\midrule
\multicolumn{8}{l}{\cellcolor{blue!4}\emph{Pure pruning}} \\
SparseGPT          & 26.54 & 19.61 & 0.266\,$\pm$\,0.003 & 0.267\,$\pm$\,0.011 & 0.036\,$\pm$\,0.005 & 0.083\,$\pm$\,0.005 & 10m 54s \\
ALPS               & 22.20 & 16.44 & 0.395\,$\pm$\,0.004 & 0.394\,$\pm$\,0.012 & 0.130\,$\pm$\,0.007 & 0.117\,$\pm$\,0.005 & 14m 51s \\
LLM-Surgeon(5sh)   & 21.39 & 15.04 & 0.451\,$\pm$\,0.004 & 0.445\,$\pm$\,0.014 & 0.184\,$\pm$\,0.008 & 0.130\,$\pm$\,0.006 & 5h 18m\\
\rowcolor{darkgreen!10}
\texttt{CACTUS-prune}
& \textbf{19.43} & \textbf{14.17} & \textbf{0.524}\,$\pm$\,0.005 & \textbf{0.512}\,$\pm$\,0.016 & \textbf{0.252}\,$\pm$\,0.010 & \textbf{0.152}\,$\pm$\,0.006 & 16m 26s \\
\midrule
\multicolumn{8}{l}{\cellcolor{blue!4}\emph{Pure low-rank}} \\
SVD-LLM(W)         & 403.18 & 632.93 & 0.016\,$\pm$\,0.001 & 0.015\,$\pm$\,0.002 & 0.018\,$\pm$\,0.004 & 0.072\,$\pm$\,0.004 & 18m 03s \\
IPSVD              & 198.39 & 322.80 & 0.042\,$\pm$\,0.001 & 0.040\,$\pm$\,0.003 & 0.019\,$\pm$\,0.004 & 0.060\,$\pm$\,0.004 & 18m 52s \\
IProX         & \textbf{68.82} & \textbf{102.61} & \textbf{0.151}\,$\pm$\,0.002 & \textbf{0.145}\,$\pm$\,0.005 & \textbf{0.075}\,$\pm$\,0.004 & \textbf{0.090}\,$\pm$\,0.005 & 34m 28s \\
\midrule
\multicolumn{8}{l}{\cellcolor{blue!4} \slr} \\
OATS               & 29.29 & 20.47 & 0.209\,$\pm$\,0.003 & 0.214\,$\pm$\,0.007 & 0.036\,$\pm$\,0.004 & 0.070\,$\pm$\,0.005 & 35m 44s \\
3BASiL             & 19.98 & 14.46 & 0.475\,$\pm$\,0.004 & 0.458\,$\pm$\,0.016 & 0.227\,$\pm$\,0.010 & 0.154\,$\pm$\,0.005 & 43m 06s \\
\rowcolor{darkgreen!10}
\texttt{CACTUS-slr}
& \underline{\textbf{18.27}} & \underline{\textbf{13.02}} & \underline{\textbf{0.585}}\,$\pm$\,0.005 & \underline{\textbf{0.569}}\,$\pm$\,0.017 & \underline{\textbf{0.327}}\,$\pm$\,0.011 & \underline{\textbf{0.199}}\,$\pm$\,0.006 & 46m 51s \\
\midrule[\heavyrulewidth]
\multicolumn{8}{c}{\cellcolor{blue!8}\textbf{70\% compression}} \\
\midrule
\multicolumn{8}{l}{\cellcolor{blue!4}\emph{Pure pruning}} \\
SparseGPT          & 156.90 & 139.13 & 0.030\,$\pm$\,0.001 & 0.030\,$\pm$\,0.003 & 0.021\,$\pm$\,0.004 & 0.049\,$\pm$\,0.003 & 10m 25s \\
ALPS               & 68.21 &  63.47 & 0.065\,$\pm$\,0.001 & 0.071\,$\pm$\,0.003 & 0.021\,$\pm$\,0.004 & 0.049\,$\pm$\,0.003 & 10m 45s \\
LLM-Surgeon(5sh)   & 64.56 &  52.02 & 0.104\,$\pm$\,0.002 & 0.107\,$\pm$\,0.005 & 0.015\,$\pm$\,0.006 & 0.055\,$\pm$\,0.004 & 5h 33m$^{\dagger}$ \\
\rowcolor{darkgreen!10}
\texttt{CACTUS-prune}
& \underline{\textbf{52.19}} & \underline{\textbf{41.08}} & \textbf{0.122}\,$\pm$\,0.002 & \underline{\textbf{0.124}}\,$\pm$\,0.004 & \textbf{0.043}\,$\pm$\,0.005 & \textbf{0.068}\,$\pm$\,0.004 & 17m 09s \\
\midrule
\multicolumn{8}{l}{\cellcolor{blue!4}\emph{Pure low-rank}} \\
SVD-LLM(W)         & 1081 & 2106 & 0.010\,$\pm$\,0.001 & 0.008\,$\pm$\,0.002 & 0.016\,$\pm$\,0.004 & 0.075\,$\pm$\,0.004 & 18m 59s \\
IPSVD              & 1095 & 1896 & 0.006\,$\pm$\,0.001 & 0.006\,$\pm$\,0.002 & 0.009\,$\pm$\,0.004 & 0.051\,$\pm$\,0.004 & 18m 13s \\
IProX              & \textbf{156.30} & \textbf{293.62} & \textbf{0.075}\,$\pm$\,0.001 & \textbf{0.073}\,$\pm$\,0.003 & \underline{\textbf{0.051}}\,$\pm$\,0.004 & \underline{\textbf{0.104}}\,$\pm$\,0.005 & 28m 18s \\
\midrule
\multicolumn{8}{l}{\cellcolor{blue!4} \slr} \\
OATS               & 389.61 & 355.95 & 0.019\,$\pm$\,0.001 & 0.023\,$\pm$\,0.002 & 0.028\,$\pm$\,0.004 & 0.062\,$\pm$\,0.004 & 35m 20s \\
3BASiL             & 79.98 &  68.06 & 0.045\,$\pm$\,0.001 & 0.047\,$\pm$\,0.003 & 0.021\,$\pm$\,0.004 & 0.048\,$\pm$\,0.003 & 38m 01s \\
\rowcolor{darkgreen!10}
\texttt{CACTUS-slr}
& \textbf{54.58} & \textbf{41.74} & \underline{\textbf{0.126}}\,$\pm$\,0.002 & \underline{\textbf{0.124}}\,$\pm$\,0.005 & \textbf{0.045}\,$\pm$\,0.006 & \textbf{0.070}\,$\pm$\,0.004 & 37m 12s \\
\bottomrule
\end{tabular}%
}
\vspace{3pt}
\caption{\textbf{Llama-3.2-1B.} Influence-preservation results in the \textbf{TracIn} (unweighted) representation.}
\label{tab:main-sweep-appendix-tracin}
\end{table}

\clearpage

\begin{table}[H]
\centering
\scriptsize
\setlength{\tabcolsep}{1.5pt}

\newcolumntype{C}[1]{>{\centering\arraybackslash}p{#1}}
\resizebox{0.90\textwidth}{!}{%
\begin{tabular}{l|C{3.0em}C{3.0em}|C{7.1em}C{7.1em}C{7.1em}C{7.1em}|C{4.2em}}
\toprule
\textbf{Method}
& \cellcolor{gray!10}\textbf{C4 PPL} ($\downarrow$)
& \cellcolor{gray!10}\textbf{WT-2 PPL} ($\downarrow$)
& \multicolumn{4}{c|}{\textbf{KFAC} ($\uparrow$)}
& \multirow{2}{*}{\shortstack{\textbf{Compress}\\\textbf{Runtime}}} \\
\cmidrule(lr){4-7}
& \cellcolor{gray!10}
& \cellcolor{gray!10}
& $\cos_{\text{tr}}$ & $\cos_{\text{te}}$ & $\rho_{\text{qry}}$ & $R_{\text{qry@5\%}}$ & \\
\midrule
\rowcolor{gray!10} \textbf{Dense} & 14.01 & 9.75 & 1.000 & 1.000 & 1.000 & 1.000 & --- \\
\midrule[\heavyrulewidth]
\multicolumn{8}{c}{\cellcolor{blue!8}\textbf{30\% compression}} \\
\midrule
\multicolumn{8}{l}{\cellcolor{blue!4}\emph{Pure pruning}} \\
SparseGPT          & 15.94 & 10.92 & 0.689\,$\pm$\,0.005 & 0.690\,$\pm$\,0.016 & 0.429\,$\pm$\,0.011 & 0.277\,$\pm$\,0.007 & 10m 24s \\
ALPS               & 15.53 & 10.71 & 0.729\,$\pm$\,0.005 & 0.724\,$\pm$\,0.016 & 0.482\,$\pm$\,0.011 & 0.297\,$\pm$\,0.008 & 14m 48s \\
LLM-Surgeon(5sh)   & 15.18 & \textbf{10.48} & 0.772\,$\pm$\,0.005 & 0.762\,$\pm$\,0.016 & 0.540\,$\pm$\,0.012 & 0.319\,$\pm$\,0.008 & 4h 57m \\
\rowcolor{darkgreen!10}
\texttt{CACTUS-prune}
& \textbf{15.03} & 10.49 & \textbf{0.794}\,$\pm$\,0.005 & \textbf{0.784}\,$\pm$\,0.017 & \textbf{0.570}\,$\pm$\,0.013 & \textbf{0.337}\,$\pm$\,0.008 & 15m 22s \\
\midrule
\multicolumn{8}{l}{\cellcolor{blue!4}\emph{Pure low-rank}} \\
SVD-LLM(W)         & 135.87 & 184.95 & 0.012\,$\pm$\,0.001 & 0.010\,$\pm$\,0.001 & 0.004\,$\pm$\,0.003 & 0.046\,$\pm$\,0.003 & 18m 15s \\
IPSVD              & 72.01 & 91.87 & 0.050\,$\pm$\,0.001 & 0.051\,$\pm$\,0.003 & 0.005\,$\pm$\,0.004 & 0.051\,$\pm$\,0.004 & 18m 54s \\
IProX         & \textbf{53.05} & \textbf{81.46} & \textbf{0.097}\,$\pm$\,0.001 & \textbf{0.095}\,$\pm$\,0.004 & \textbf{0.015}\,$\pm$\,0.003 & \textbf{0.081}\,$\pm$\,0.004 & 36m 41s \\
\midrule
\multicolumn{8}{l}{\cellcolor{blue!4} \slr} \\
OATS               & 15.77 & 10.78 & 0.713\,$\pm$\,0.004 & 0.703\,$\pm$\,0.016 & 0.458\,$\pm$\,0.011 & 0.275\,$\pm$\,0.006 & 36m 46s \\
3BASiL             & 14.92 & 10.27 & 0.814\,$\pm$\,0.004 & 0.810\,$\pm$\,0.015 & 0.609\,$\pm$\,0.012 & 0.388\,$\pm$\,0.009 & 50m 07s \\
\rowcolor{darkgreen!10}
\texttt{CACTUS-slr}
& \underline{\textbf{14.64}} & \underline{\textbf{10.12}} & \underline{\textbf{0.862}}\,$\pm$\,0.004 & \underline{\textbf{0.849}}\,$\pm$\,0.014 & \underline{\textbf{0.687}}\,$\pm$\,0.012 & \underline{\textbf{0.447}}\,$\pm$\,0.010 & 59m 31s \\
\midrule[\heavyrulewidth]
\multicolumn{8}{c}{\cellcolor{blue!8}\textbf{50\% compression}} \\
\midrule
\multicolumn{8}{l}{\cellcolor{blue!4}\emph{Pure pruning}} \\
SparseGPT          & 26.54 & 19.61 & 0.235\,$\pm$\,0.003 & 0.235\,$\pm$\,0.009 & 0.028\,$\pm$\,0.004 & 0.083\,$\pm$\,0.004 & 10m 54s \\
ALPS               & 22.20 & 16.44 & 0.336\,$\pm$\,0.003 & 0.333\,$\pm$\,0.010 & 0.090\,$\pm$\,0.004 & 0.120\,$\pm$\,0.005 & 14m 51s \\
LLM-Surgeon(5sh)   & 21.39 & 15.04 & 0.375\,$\pm$\,0.004 & 0.371\,$\pm$\,0.012 & 0.118\,$\pm$\,0.005 & 0.126\,$\pm$\,0.005 & 5h 18m \\
\rowcolor{darkgreen!10}
\texttt{CACTUS-prune}
& \textbf{19.43} & \textbf{14.17} & \textbf{0.444}\,$\pm$\,0.004 & \textbf{0.434}\,$\pm$\,0.014 & \textbf{0.171}\,$\pm$\,0.007 & \textbf{0.136}\,$\pm$\,0.005 & 16m 26s \\
\midrule
\multicolumn{8}{l}{\cellcolor{blue!4}\emph{Pure low-rank}} \\
SVD-LLM(W)         & 403.18 & 632.93 & 0.004\,$\pm$\,0.001 & 0.005\,$\pm$\,0.001 & 0.001\,$\pm$\,0.004 & 0.060\,$\pm$\,0.004 & 18m 03s \\
IPSVD              & 198.39 & 322.80 & 0.012\,$\pm$\,0.001 & 0.012\,$\pm$\,0.001 & 0.004\,$\pm$\,0.004 & 0.048\,$\pm$\,0.003 & 18m 52s \\
IProX         & \textbf{68.82} & \textbf{102.61} & \textbf{0.060}\,$\pm$\,0.001 & \textbf{0.060}\,$\pm$\,0.002 & \textbf{0.009}\,$\pm$\,0.003 & \textbf{0.078}\,$\pm$\,0.004 & 34m 28s \\
\midrule
\multicolumn{8}{l}{\cellcolor{blue!4} \slr} \\
OATS               & 29.29 & 20.47 & 0.179\,$\pm$\,0.002 & 0.185\,$\pm$\,0.006 & 0.017\,$\pm$\,0.004 & 0.057\,$\pm$\,0.004 & 35m 44s \\
3BASiL             & 19.98 & 14.46 & 0.409\,$\pm$\,0.004 & 0.396\,$\pm$\,0.014 & 0.140\,$\pm$\,0.006 & 0.137\,$\pm$\,0.005 & 43m 06s \\
\rowcolor{darkgreen!10}
\texttt{CACTUS-slr}
& \underline{\textbf{18.27}} & \underline{\textbf{13.02}} & \underline{\textbf{0.505}}\,$\pm$\,0.005 & \underline{\textbf{0.493}}\,$\pm$\,0.016 & \underline{\textbf{0.219}}\,$\pm$\,0.008 & \underline{\textbf{0.163}}\,$\pm$\,0.005 & 46m 51s \\
\midrule[\heavyrulewidth]
\multicolumn{8}{c}{\cellcolor{blue!8}\textbf{70\% compression}} \\
\midrule
\multicolumn{8}{l}{\cellcolor{blue!4}\emph{Pure pruning}} \\
SparseGPT          & 156.90 & 139.13 & 0.012\,$\pm$\,0.001 & 0.011\,$\pm$\,0.001 & 0.002\,$\pm$\,0.004 & 0.033\,$\pm$\,0.003 & 10m 25s \\
ALPS               & 68.21 &  63.47 & 0.037\,$\pm$\,0.001 & 0.040\,$\pm$\,0.002 & 0.004\,$\pm$\,0.003 & 0.041\,$\pm$\,0.003 & 10m 45s \\
LLM-Surgeon(5sh)   & 64.56 &  52.02 & 0.063\,$\pm$\,0.001 & 0.064\,$\pm$\,0.003 & 0.003\,$\pm$\,0.004 & 0.039\,$\pm$\,0.004 & 5h 33m \\
\rowcolor{darkgreen!10}
\texttt{CACTUS-prune}
& \underline{\textbf{52.19}} & \underline{\textbf{41.08}} & \underline{\textbf{0.071}}\,$\pm$\,0.001 & \underline{\textbf{0.072}}\,$\pm$\,0.003 & \textbf{0.009}\,$\pm$\,0.003 & \textbf{0.055}\,$\pm$\,0.004 & 17m 09s \\
\midrule
\multicolumn{8}{l}{\cellcolor{blue!4}\emph{Pure low-rank}} \\
SVD-LLM(W)         & 1081 & 2106 & 0.002\,$\pm$\,0.001 & 0.001\,$\pm$\,0.001 & 0.002\,$\pm$\,0.003 & 0.067\,$\pm$\,0.004 & 18m 59s \\
IPSVD              & 1095 & 1896 & 0.001\,$\pm$\,0.001 & 0.003\,$\pm$\,0.001 & 0.002\,$\pm$\,0.004 & 0.046\,$\pm$\,0.004 & 18m 13s \\
IProX              & \textbf{156.30} & \textbf{293.62} & \textbf{0.020}\,$\pm$\,0.001 & \textbf{0.019}\,$\pm$\,0.002 & \underline{\textbf{0.010}}\,$\pm$\,0.003 & \underline{\textbf{0.094}}\,$\pm$\,0.004 & 28m 18s \\
\midrule
\multicolumn{8}{l}{\cellcolor{blue!4} \slr} \\
OATS               & 389.61 & 355.95 & 0.006\,$\pm$\,0.001 & 0.006\,$\pm$\,0.002 & $-0.002$\,$\pm$\,0.003 & 0.044\,$\pm$\,0.003 & 35m 20s \\
3BASiL             & 79.98 &  68.06 & 0.024\,$\pm$\,0.001 & 0.026\,$\pm$\,0.002 & 0.002\,$\pm$\,0.003 & 0.037\,$\pm$\,0.003 & 38m 01s \\
\rowcolor{darkgreen!10}
\texttt{CACTUS-slr}
& \textbf{54.58} & \textbf{41.74} & \underline{\textbf{0.071}}\,$\pm$\,0.001 & \underline{\textbf{0.072}}\,$\pm$\,0.003 & \textbf{0.009}\,$\pm$\,0.003 & \textbf{0.058}\,$\pm$\,0.004 & 37m 12s \\
\bottomrule
\end{tabular}%
}
\vspace{3pt}
\caption{\textbf{Llama-3.2-1B.} Influence-preservation results in the \textbf{KFAC} representation.}
\label{tab:main-sweep-appendix-kfac}
\end{table}

\clearpage

\paragraph{Qwen3-4B compression results.} Tables~\ref{tab:main-sweep-appendix-tracin-q4b}--\ref{tab:main-sweep-appendix-kfac-q4b} report the same comparison of nine methods on Qwen3-4B at the three compression levels ($30\%, 50\%, 70\%$), under the identical evaluation protocol of Section~\ref{sect:exp-setup} ($n_{\text{tr}}=1000$ C4, $n_{\text{te}}=100$ WT-2, projection dim $d=8192$). Each cell is annotated with $\pm\,\widehat{\mathrm{SEM}}$ following the same conventions as Tables~\ref{tab:main-sweep-appendix-tracin}.

\begin{table}[H]
\centering
\scriptsize
\setlength{\tabcolsep}{1.5pt}

\newcolumntype{C}[1]{>{\centering\arraybackslash}p{#1}}
\resizebox{0.90\textwidth}{!}{%
\begin{tabular}{l|C{3.0em}C{3.0em}|C{7.1em}C{7.1em}C{7.1em}C{7.1em}|C{4.2em}}
\toprule
\textbf{Method}
& \cellcolor{gray!10}\textbf{C4 PPL} ($\downarrow$)
& \cellcolor{gray!10}\textbf{WT-2 PPL} ($\downarrow$)
& \multicolumn{4}{c|}{\textbf{TracIn} ($\uparrow$)}
& \multirow{2}{*}{\shortstack{\textbf{Compress}\\\textbf{Runtime}}} \\
\cmidrule(lr){4-7}
& \cellcolor{gray!10}
& \cellcolor{gray!10}
& $\cos_{\text{tr}}$ & $\cos_{\text{te}}$ & $\rho_{\text{qry}}$ & $R_{\text{qry@5\%}}$ & \\
\midrule
\rowcolor{gray!10} \textbf{Dense} & 19.85 & 13.66 & 1.000 & 1.000 & 1.000 & 1.000 & --- \\
\midrule[\heavyrulewidth]
\multicolumn{8}{c}{\cellcolor{blue!8}\textbf{30\% compression}} \\
\midrule
\multicolumn{8}{l}{\cellcolor{blue!4}\emph{Pure pruning}} \\
SparseGPT                & 21.05 & 14.53 & 0.852\,$\pm$\,0.004 & 0.862\,$\pm$\,0.008 & 0.959\,$\pm$\,0.002 & 0.746\,$\pm$\,0.006 & 24m 53s \\
ALPS                     & 20.44 & \textbf{13.84} & 0.892\,$\pm$\,0.003 & 0.903\,$\pm$\,0.006 & 0.977\,$\pm$\,0.001 & 0.820\,$\pm$\,0.004 & 43m 07s \\
LLM-Surgeon(5sh)         & 20.44 & 14.33 & 0.910\,$\pm$\,0.004 & 0.915\,$\pm$\,0.008 & 0.971\,$\pm$\,0.003 & 0.829\,$\pm$\,0.009 & 18h 41m$^{\dagger}$ \\
\rowcolor{darkgreen!10}
\texttt{CACTUS-prune}      & \textbf{20.21} & 13.87 & \textbf{0.920}\,$\pm$\,0.003 & \textbf{0.923}\,$\pm$\,0.005 & \textbf{0.984}\,$\pm$\,0.001 & \textbf{0.837}\,$\pm$\,0.003 & 45m 59s \\
\midrule
\multicolumn{8}{l}{\cellcolor{blue!4}\emph{Pure low-rank}} \\
SVD-LLM(W)               & 47.31 & 70.50 & 0.178\,$\pm$\,0.002 & 0.183\,$\pm$\,0.007 & 0.309\,$\pm$\,0.007 & 0.103\,$\pm$\,0.004 & 54m 43s \\
IPSVD                    & \textbf{37.79} & \textbf{60.98} & \textbf{0.378}\,$\pm$\,0.004 & \textbf{0.383}\,$\pm$\,0.013 & \textbf{0.545}\,$\pm$\,0.011 & \textbf{0.230}\,$\pm$\,0.008 & 51m 00s \\
\midrule
\multicolumn{8}{l}{\cellcolor{blue!4} \slr} \\
OATS                     & 20.13 & 13.96 & 0.892\,$\pm$\,0.003 & 0.889\,$\pm$\,0.007 & 0.969\,$\pm$\,0.001 & 0.748\,$\pm$\,0.004 & 2h 13m \\
3BASiL                   & 20.15 & \underline{\textbf{13.72}} & 0.922\,$\pm$\,0.003 & 0.928\,$\pm$\,0.005 & 0.980\,$\pm$\,0.001 & 0.819\,$\pm$\,0.003 & 3h 03m \\
\rowcolor{darkgreen!10}
\texttt{CACTUS-slr}        & \underline{\textbf{20.06}} & 13.73 & \underline{\textbf{0.937}}\,$\pm$\,0.003 & \underline{\textbf{0.945}}\,$\pm$\,0.004 & \underline{\textbf{0.987}}\,$\pm$\,0.001 & \underline{\textbf{0.859}}\,$\pm$\,0.004 & 3h 50m \\
\midrule[\heavyrulewidth]
\multicolumn{8}{c}{\cellcolor{blue!8}\textbf{50\% compression}} \\
\midrule
\multicolumn{8}{l}{\cellcolor{blue!4}\emph{Pure pruning}} \\
SparseGPT                & 24.43 & 18.04 & 0.656\,$\pm$\,0.004 & 0.660\,$\pm$\,0.008 & 0.874\,$\pm$\,0.002 & 0.551\,$\pm$\,0.005 & 24m 07s \\
ALPS                     & 22.60 & 16.40 & 0.709\,$\pm$\,0.004 & 0.710\,$\pm$\,0.008 & 0.895\,$\pm$\,0.002 & 0.585\,$\pm$\,0.005 & 43m 24s \\
LLM-Surgeon(5sh)         & 23.50 & 19.80 & 0.680\,$\pm$\,0.005 & 0.670\,$\pm$\,0.015 & 0.864\,$\pm$\,0.006 & 0.561\,$\pm$\,0.007 & 20h 13m$^{\dagger}$ \\
\rowcolor{darkgreen!10}
\texttt{CACTUS-prune}      & \textbf{21.85} & \textbf{16.27} & \textbf{0.762}\,$\pm$\,0.004 & \textbf{0.755}\,$\pm$\,0.010 & \textbf{0.927}\,$\pm$\,0.002 & \textbf{0.697}\,$\pm$\,0.005 & 42m 18s \\
\midrule
\multicolumn{8}{l}{\cellcolor{blue!4}\emph{Pure low-rank}} \\
SVD-LLM(W)               & 176.58 & 213.73 & 0.113\,$\pm$\,0.002 & 0.118\,$\pm$\,0.005 & 0.193\,$\pm$\,0.004 & 0.052\,$\pm$\,0.003 & 53m 15s \\
IPSVD                    & \textbf{94.19} & \textbf{167.49} & \textbf{0.146}\,$\pm$\,0.003 & \textbf{0.151}\,$\pm$\,0.007 & \textbf{0.354}\,$\pm$\,0.008 & \textbf{0.143}\,$\pm$\,0.004 & 53m 27s \\
\midrule
\multicolumn{8}{l}{\cellcolor{blue!4} \slr} \\
OATS                     & 23.97 & 16.73 & 0.652\,$\pm$\,0.003 & 0.653\,$\pm$\,0.009 & 0.804\,$\pm$\,0.002 & 0.508\,$\pm$\,0.004 & 2h 05m \\
3BASiL                   & 21.63 & \underline{\textbf{15.34}} & 0.743\,$\pm$\,0.004 & 0.747\,$\pm$\,0.008 & 0.919\,$\pm$\,0.001 & 0.695\,$\pm$\,0.004 & 2h 37m \\
\rowcolor{darkgreen!10}
\texttt{CACTUS-slr}        & \underline{\textbf{21.38}} & 15.59 & \underline{\textbf{0.782}}\,$\pm$\,0.004 & \underline{\textbf{0.790}}\,$\pm$\,0.009 & \underline{\textbf{0.937}}\,$\pm$\,0.001 & \underline{\textbf{0.698}}\,$\pm$\,0.005 & 2h 54m \\
\midrule[\heavyrulewidth]
\multicolumn{8}{c}{\cellcolor{blue!8}\textbf{70\% compression}} \\
\midrule
\multicolumn{8}{l}{\cellcolor{blue!4}\emph{Pure pruning}} \\
SparseGPT                & 47.88 & 48.97 & 0.303\,$\pm$\,0.003 & 0.298\,$\pm$\,0.011 & 0.424\,$\pm$\,0.009 & 0.128\,$\pm$\,0.004 & 24m 01s \\
ALPS                     & 34.95 & 32.32 & 0.406\,$\pm$\,0.004 & 0.397\,$\pm$\,0.011 & 0.534\,$\pm$\,0.008 & 0.070\,$\pm$\,0.003 & 44m 15s \\
LLM-Surgeon(5sh)         & 49.54 & 62.56 & 0.325\,$\pm$\,0.004 & 0.323\,$\pm$\,0.012 & 0.524\,$\pm$\,0.025 & 0.225\,$\pm$\,0.008 & 22h 01m$^{\dagger}$ \\
\rowcolor{darkgreen!10}
\texttt{CACTUS-prune}      & \textbf{30.73} & \textbf{29.19} & \textbf{0.475}\,$\pm$\,0.004 & \underline{\textbf{0.468}}\,$\pm$\,0.014 & \textbf{0.666}\,$\pm$\,0.015 & \underline{\textbf{0.303}}\,$\pm$\,0.008 & 47m 59s \\
\midrule
\multicolumn{8}{l}{\cellcolor{blue!4}\emph{Pure low-rank}} \\
SVD-LLM(W)               & \textbf{316.85} & 521.24 & 0.101\,$\pm$\,0.002 & 0.112\,$\pm$\,0.005 & 0.161\,$\pm$\,0.004 & \textbf{0.062}\,$\pm$\,0.003 & 52m 20s \\
IPSVD                    & 328.27 & \textbf{489.76} & \textbf{0.147}\,$\pm$\,0.002 & \textbf{0.154}\,$\pm$\,0.006 & \textbf{0.183}\,$\pm$\,0.005 & 0.061\,$\pm$\,0.003 & 49m 25s \\
\midrule
\multicolumn{8}{l}{\cellcolor{blue!4} \slr} \\
OATS                     & 89.01 & 68.15 & 0.064\,$\pm$\,0.002 & 0.059\,$\pm$\,0.005 & 0.328\,$\pm$\,0.009 & 0.113\,$\pm$\,0.005 & 2h 07m \\
3BASiL                   & 34.32 & 31.28 & 0.408\,$\pm$\,0.004 & 0.400\,$\pm$\,0.012 & 0.550\,$\pm$\,0.010 & 0.186\,$\pm$\,0.005 & 2h 11m \\
\rowcolor{darkgreen!10}
\texttt{CACTUS-slr}        & \underline{\textbf{30.21}} & \underline{\textbf{27.83}} & \underline{\textbf{0.477}}\,$\pm$\,0.004 & \underline{\textbf{0.468}}\,$\pm$\,0.013 & \underline{\textbf{0.674}}\,$\pm$\,0.012 & \textbf{0.275}\,$\pm$\,0.008 & 2h 14m \\
\bottomrule
\end{tabular}%
}
\vspace{3pt}
\caption{\textbf{Qwen3-4B} influence-preservation results in the \textbf{TracIn} representation.}
\label{tab:main-sweep-appendix-tracin-q4b}
\end{table}

\clearpage

\begin{table}[H]
\centering
\scriptsize
\setlength{\tabcolsep}{1.5pt}

\newcolumntype{C}[1]{>{\centering\arraybackslash}p{#1}}
\resizebox{0.90\textwidth}{!}{%
\begin{tabular}{l|C{3.0em}C{3.0em}|C{7.1em}C{7.1em}C{7.1em}C{7.1em}|C{4.2em}}
\toprule
\textbf{Method}
& \cellcolor{gray!10}\textbf{C4 PPL} ($\downarrow$)
& \cellcolor{gray!10}\textbf{WT-2 PPL} ($\downarrow$)
& \multicolumn{4}{c|}{\textbf{KFAC} ($\uparrow$)}
& \multirow{2}{*}{\shortstack{\textbf{Compress}\\\textbf{Runtime}}} \\
\cmidrule(lr){4-7}
& \cellcolor{gray!10}
& \cellcolor{gray!10}
& $\cos_{\text{tr}}$ & $\cos_{\text{te}}$ & $\rho_{\text{qry}}$ & $R_{\text{qry@5\%}}$ & \\
\midrule
\rowcolor{gray!10} \textbf{Dense} & 19.85 & 13.66 & 1.000 & 1.000 & 1.000 & 1.000 & --- \\
\midrule[\heavyrulewidth]
\multicolumn{8}{c}{\cellcolor{blue!8}\textbf{30\% compression}} \\
\midrule
\multicolumn{8}{l}{\cellcolor{blue!4}\emph{Pure pruning}} \\
SparseGPT                & 21.05 & 14.53 & 0.739\,$\pm$\,0.004 & 0.750\,$\pm$\,0.010 & 0.498\,$\pm$\,0.008 & 0.292\,$\pm$\,0.008 & 24m 53s \\
ALPS                     & 20.44 & \textbf{13.84} & 0.794\,$\pm$\,0.004 & 0.805\,$\pm$\,0.009 & 0.595\,$\pm$\,0.007 & 0.375\,$\pm$\,0.006 & 43m 07s \\
LLM-Surgeon(5sh)         & 20.44 & 14.33 & \textbf{0.825}\,$\pm$\,0.004 & \textbf{0.836}\,$\pm$\,0.009 & \textbf{0.630}\,$\pm$\,0.010 & 0.373\,$\pm$\,0.012 & 18h 41m$^{\dagger}$ \\
\rowcolor{darkgreen!10}
\texttt{CACTUS-prune}      & \textbf{20.21} & 13.87 & 0.816\,$\pm$\,0.004 & 0.816\,$\pm$\,0.009 & 0.624\,$\pm$\,0.008 & \textbf{0.388}\,$\pm$\,0.007 & 45m 59s \\
\midrule
\multicolumn{8}{l}{\cellcolor{blue!4}\emph{Pure low-rank}} \\
SVD-LLM(W)               & 47.31 & 70.50 & 0.075\,$\pm$\,0.001 & 0.079\,$\pm$\,0.004 & 0.018\,$\pm$\,0.004 & 0.075\,$\pm$\,0.004 & 54m 43s \\
IPSVD                    & \textbf{37.79} & \textbf{60.98} & \textbf{0.141}\,$\pm$\,0.002 & \textbf{0.140}\,$\pm$\,0.005 & \textbf{0.036}\,$\pm$\,0.004 & \textbf{0.081}\,$\pm$\,0.004 & 51m 00s \\
\midrule
\multicolumn{8}{l}{\cellcolor{blue!4} \slr} \\
OATS                     & 20.13 & 13.96 & 0.790\,$\pm$\,0.003 & 0.788\,$\pm$\,0.009 & 0.588\,$\pm$\,0.007 & 0.360\,$\pm$\,0.007 & 2h 13m \\
3BASiL                   & 20.15 & \underline{\textbf{13.72}} & 0.857\,$\pm$\,0.004 & 0.858\,$\pm$\,0.010 & 0.698\,$\pm$\,0.008 & 0.441\,$\pm$\,0.008 & 3h 03m \\
\rowcolor{darkgreen!10}
\texttt{CACTUS-slr}        & \underline{\textbf{20.06}} & 13.73 & \underline{\textbf{0.866}}\,$\pm$\,0.004 & \underline{\textbf{0.871}}\,$\pm$\,0.008 & \underline{\textbf{0.715}}\,$\pm$\,0.007 & \underline{\textbf{0.460}}\,$\pm$\,0.008 & 3h 50m \\
\midrule[\heavyrulewidth]
\multicolumn{8}{c}{\cellcolor{blue!8}\textbf{50\% compression}} \\
\midrule
\multicolumn{8}{l}{\cellcolor{blue!4}\emph{Pure pruning}} \\
SparseGPT                & 24.43 & 18.04 & 0.387\,$\pm$\,0.003 & 0.388\,$\pm$\,0.008 & 0.138\,$\pm$\,0.004 & 0.128\,$\pm$\,0.005 & 24m 07s \\
ALPS                     & 22.60 & 16.40 & 0.458\,$\pm$\,0.004 & 0.455\,$\pm$\,0.010 & 0.190\,$\pm$\,0.005 & \textbf{0.150}\,$\pm$\,0.006 & 43m 24s \\
LLM-Surgeon(5sh)         & 23.50 & 19.80 & 0.397\,$\pm$\,0.004 & 0.383\,$\pm$\,0.013 & 0.140\,$\pm$\,0.005 & 0.125\,$\pm$\,0.005 & 20h 13m$^{\dagger}$ \\
\rowcolor{darkgreen!10}
\texttt{CACTUS-prune}      & \textbf{21.85} & \textbf{16.27} & \textbf{0.493}\,$\pm$\,0.005 & \textbf{0.474}\,$\pm$\,0.013 & \textbf{0.211}\,$\pm$\,0.006 & 0.149\,$\pm$\,0.006 & 42m 18s \\
\midrule
\multicolumn{8}{l}{\cellcolor{blue!4}\emph{Pure low-rank}} \\
SVD-LLM(W)               & 176.58 & 213.73 & 0.007\,$\pm$\,0.001 & 0.009\,$\pm$\,0.001 & 0.013\,$\pm$\,0.004 & 0.047\,$\pm$\,0.004 & 53m 15s \\
IPSVD                    & \textbf{94.19} & \textbf{167.49} & \textbf{0.030}\,$\pm$\,0.001 & \textbf{0.032}\,$\pm$\,0.002 & \textbf{0.021}\,$\pm$\,0.004 & \textbf{0.062}\,$\pm$\,0.004 & 53m 27s \\
\midrule
\multicolumn{8}{l}{\cellcolor{blue!4} \slr} \\
OATS                     & 23.97 & 16.73 & 0.380\,$\pm$\,0.003 & 0.383\,$\pm$\,0.009 & 0.140\,$\pm$\,0.004 & 0.130\,$\pm$\,0.005 & 2h 05m \\
3BASiL                   & 21.63 & \underline{\textbf{15.34}} & 0.521\,$\pm$\,0.004 & 0.521\,$\pm$\,0.011 & 0.252\,$\pm$\,0.005 & \underline{\textbf{0.187}}\,$\pm$\,0.006 & 2h 37m \\
\rowcolor{darkgreen!10}
\texttt{CACTUS-slr}        & \underline{\textbf{21.38}} & 15.59 & \underline{\textbf{0.539}}\,$\pm$\,0.005 & \underline{\textbf{0.538}}\,$\pm$\,0.013 & \underline{\textbf{0.259}}\,$\pm$\,0.007 & 0.173\,$\pm$\,0.006 & 2h 54m \\
\midrule[\heavyrulewidth]
\multicolumn{8}{c}{\cellcolor{blue!8}\textbf{70\% compression}} \\
\midrule
\multicolumn{8}{l}{\cellcolor{blue!4}\emph{Pure pruning}} \\
SparseGPT                & 47.88 & 48.97 & 0.079\,$\pm$\,0.001 & 0.082\,$\pm$\,0.004 & 0.024\,$\pm$\,0.004 & 0.063\,$\pm$\,0.003 & 24m 01s \\
ALPS                     & 34.95 & 32.32 & 0.147\,$\pm$\,0.002 & 0.145\,$\pm$\,0.005 & 0.042\,$\pm$\,0.004 & 0.074\,$\pm$\,0.004 & 44m 15s \\
LLM-Surgeon(5sh)         & 49.54 & 62.56 & 0.085\,$\pm$\,0.001 & 0.085\,$\pm$\,0.003 & 0.025\,$\pm$\,0.004 & \underline{\textbf{0.094}}\,$\pm$\,0.005 & 22h 01m$^{\dagger}$ \\
\rowcolor{darkgreen!10}
\texttt{CACTUS-prune}      & \textbf{30.73} & \textbf{29.19} & \underline{\textbf{0.174}}\,$\pm$\,0.003 & \underline{\textbf{0.172}}\,$\pm$\,0.007 & \underline{\textbf{0.044}}\,$\pm$\,0.004 & 0.087\,$\pm$\,0.004 & 47m 59s \\
\midrule
\multicolumn{8}{l}{\cellcolor{blue!4}\emph{Pure low-rank}} \\
SVD-LLM(W)               & \textbf{316.85} & 521.24 & 0.002\,$\pm$\,0.001 & 0.003\,$\pm$\,0.001 & \textbf{0.008}\,$\pm$\,0.004 & \textbf{0.050}\,$\pm$\,0.003 & 52m 20s \\
IPSVD                    & 328.27 & \textbf{489.76} & \textbf{0.003}\,$\pm$\,0.001 & \textbf{0.006}\,$\pm$\,0.001 & 0.000\,$\pm$\,0.004 & \textbf{0.050}\,$\pm$\,0.003 & 49m 25s \\
\midrule
\multicolumn{8}{l}{\cellcolor{blue!4} \slr} \\
OATS                     & 89.01 & 68.15 & 0.024\,$\pm$\,0.001 & 0.026\,$\pm$\,0.002 & 0.026\,$\pm$\,0.004 & 0.060\,$\pm$\,0.004 & 2h 07m \\
3BASiL                   & 34.32 & 31.28 & 0.150\,$\pm$\,0.002 & 0.146\,$\pm$\,0.006 & 0.032\,$\pm$\,0.004 & 0.071\,$\pm$\,0.004 & 2h 11m \\
\rowcolor{darkgreen!10}
\texttt{CACTUS-slr}        & \underline{\textbf{30.21}} & \underline{\textbf{27.83}} & \textbf{0.173}\,$\pm$\,0.002 & \textbf{0.167}\,$\pm$\,0.006 & \textbf{0.038}\,$\pm$\,0.004 & \textbf{0.082}\,$\pm$\,0.004 & 2h 14m \\
\bottomrule
\end{tabular}%
}
\vspace{3pt}
\caption{\textbf{Qwen3-4B.} Influence-preservation results in the \textbf{KFAC} representation.}
\label{tab:main-sweep-appendix-kfac-q4b}
\end{table}

\clearpage

\paragraph{Llama-3-8B at $(2{:}4{+}r256)$.} \Cref{tab:main-sweep-appendix-l8b-24-r256} reports the four influence-preservation metrics for the Llama-3-8B \slr configuration ($2{:}4{+}r256$) alongside C4 and WikiText-2 perplexities. Evaluation protocol matches \cref{sect:exp-setup}: $n_{\text{tr}}=1000$ C4 training samples and $n_{\text{te}}=100$ WikiText-2 test samples at sequence length $512$, projection dim $d=8192$. \textbf{Bold}: best across the three reported \slr methods.

\begin{table}[H]
\centering
\scriptsize
\setlength{\tabcolsep}{1.5pt}

\newcolumntype{C}[1]{>{\centering\arraybackslash}p{#1}}
\resizebox{0.75\textwidth}{!}{%
\begin{tabular}{l|C{3.0em}C{3.0em}|C{7.1em}C{7.1em}C{7.1em}C{7.1em}|C{4.2em}}
\toprule
\textbf{Method}
& \cellcolor{gray!10}\textbf{C4 PPL} ($\downarrow$)
& \cellcolor{gray!10}\textbf{WT-2 PPL} ($\downarrow$)
& \multicolumn{4}{c|}{\textbf{TracIn} ($\uparrow$)}
& \multirow{2}{*}{\shortstack{\textbf{Compress}\\\textbf{Runtime}}} \\
\cmidrule(lr){4-7}
& \cellcolor{gray!10}
& \cellcolor{gray!10}
& $\cos_{\text{tr}}$ & $\cos_{\text{te}}$ & $\rho_{\text{qry}}$ & $R_{\text{qry@5\%}}$ & \\
\midrule
\rowcolor{gray!10} \textbf{Dense} & 9.45 & 6.14 & 1.000 & 1.000 & 1.000 & 1.000 & --- \\
\midrule[\heavyrulewidth]
\multicolumn{8}{c}{\cellcolor{blue!8}\textbf{$2{:}4 + r256$ compression}} \\
\midrule
\multicolumn{8}{l}{\cellcolor{blue!4} \slr} \\
OATS               & 13.85 & 9.02 & 0.391\,$\pm$\,0.004 & 0.393\,$\pm$\,0.013 & 0.093\,$\pm$\,0.009 & 0.093\,$\pm$\,0.005 & 5h 28m \\
3BASiL             & 12.27 & 8.06 & 0.518\,$\pm$\,0.005 & 0.523\,$\pm$\,0.016 & 0.170\,$\pm$\,0.016 & 0.148\,$\pm$\,0.011 & \textbf{2h 42m} \\
\rowcolor{darkgreen!10}
\texttt{\ourmethod-SLR}
& \textbf{12.21} & \textbf{8.04} & \textbf{0.573}\,$\pm$\,0.006 & \textbf{0.569}\,$\pm$\,0.019 & \textbf{0.236}\,$\pm$\,0.026 & \textbf{0.224}\,$\pm$\,0.018 & 4h 07m \\
\bottomrule
\end{tabular}%
}
\vspace{3pt}
\caption{\textbf{Llama-3-8B.} Influence-preservation results at the $(2{:}4{+}r256)$ \slr configuration in the \textbf{TracIn} (unweighted) representation. Each cell is annotated with $\pm\,\widehat{\mathrm{SEM}}$ following the same convention as Tables~\ref{tab:main-sweep-appendix-tracin}--\ref{tab:main-sweep-appendix-kfac}. Compression runtime is measured on a single A100 GPU.}
\label{tab:main-sweep-appendix-l8b-24-r256}
\end{table}

\paragraph{Llama-3-8B perplexity results across $2{:}4$ and $2{:}4{+}r$ \slr configurations.} \Cref{tab:main-sweep-appendix-l8b-ppl} reports C4, WikiText-2, and PTB~\citep{Marcus1994} perplexities ($\downarrow$) across three compression budgets: $2{:}4$ pure pruning, and the \slr settings $(2{:}4{+}r64)$ and $(2{:}4{+}r256)$. For each budget, three methods are reported in order: activation-only: SparseGPT and ALPS for pure pruning, followed by OATS and 3BASiL for \slr, and curvature-aware: \ourmethod.

\begin{table}[H]
\centering
\scriptsize
\setlength{\tabcolsep}{2.2pt}
\newcolumntype{C}[1]{>{\centering\arraybackslash}p{#1}}
\resizebox{0.7\textwidth}{!}{%
\begin{tabular}{l|C{2.2em}C{2.2em}C{2.2em}|l|C{2.2em}C{2.2em}C{2.2em}|C{2.2em}C{2.2em}C{2.2em}}
\toprule
\rowcolor{gray!10} \textbf{Dense} & \multicolumn{10}{c}{C4 $=$ 9.45 \quad WT-2 $=$ 6.14 \quad PTB $=$ 11.18} \\
\midrule
\textbf{Method}
& \multicolumn{3}{c|}{\cellcolor{gray!10}\textbf{$2{:}4$} pruning}
& \textbf{Method}
& \multicolumn{3}{c|}{\cellcolor{gray!10}\textbf{$(2{:}4{+}r64)$}}
& \multicolumn{3}{c}{\cellcolor{gray!10}\textbf{$(2{:}4{+}r256)$}} \\
\cmidrule(lr){2-4}\cmidrule(lr){6-8}\cmidrule(lr){9-11}
& \tiny{C4} & \tiny{WT-2} & \tiny{PTB} & & \tiny{C4} & \tiny{WT-2} & \tiny{PTB} & \tiny{C4} & \tiny{WT-2} & \tiny{PTB} \\
\midrule
\texttt{SparseGPT}      & 22.27 & 15.98 & 24.30 & \texttt{OATS}        & 22.28 & 15.36 & 23.33 & 13.85 & 9.02 & 14.47 \\
\texttt{ALPS}           & 20.08 & 14.44 & 20.62 & \texttt{3BASiL}      & 15.84 & 11.20 & 16.10 & 12.27 & 8.06 & \textbf{12.73} \\
\rowcolor{darkgreen!10}
\texttt{\ourmethod-prune}
& \textbf{19.74} & \textbf{14.37} & \textbf{20.12}
& \texttt{\ourmethod-SLR}
& \textbf{15.46} & \textbf{10.75} & \textbf{15.73}
& \textbf{12.21} & \textbf{8.04} & 12.80 \\
\bottomrule
\end{tabular}%
}
\vspace{3pt}
\caption{Llama-3-8B perplexity ($\downarrow$) for one-shot $2{:}4$ pruning, and $(2{:}4{+}r)$ \slr compression. \textbf{Bold}: best across the three reported methods within each compression budget.}
\label{tab:main-sweep-appendix-l8b-ppl}
\end{table}

\paragraph{Llama-3.2-3B unstructured pruning.} \Cref{tab:main-sweep-appendix-l3b-ppl} reports C4 and WikiText-2 perplexities ($\downarrow$) for unstructured one-shot pruning on Llama-3.2-3B across nine sparsity levels.

\begin{table}[H]
\centering
\scriptsize
\setlength{\tabcolsep}{2.2pt}

\newcolumntype{C}[1]{>{\centering\arraybackslash}p{#1}}
\resizebox{\textwidth}{!}{%
\begin{tabular}{l|C{2.3em}C{2.3em}|C{2.3em}C{2.3em}|C{2.3em}C{2.3em}|C{2.3em}C{2.3em}|C{2.3em}C{2.3em}|C{2.3em}C{2.3em}|C{2.3em}C{2.3em}|C{2.4em}C{2.6em}|C{2.6em}C{3.0em}}
\toprule
\rowcolor{gray!10} \textbf{Dense} & \multicolumn{18}{c}{C4 $=$ 11.33 \quad WT-2 $=$ 7.81} \\
\midrule
\textbf{Method}
& \multicolumn{2}{c|}{\cellcolor{gray!10}\textbf{$10\%$}}
& \multicolumn{2}{c|}{\cellcolor{gray!10}\textbf{$20\%$}}
& \multicolumn{2}{c|}{\cellcolor{gray!10}\textbf{$30\%$}}
& \multicolumn{2}{c|}{\cellcolor{gray!10}\textbf{$40\%$}}
& \multicolumn{2}{c|}{\cellcolor{gray!10}\textbf{$50\%$}}
& \multicolumn{2}{c|}{\cellcolor{gray!10}\textbf{$60\%$}}
& \multicolumn{2}{c|}{\cellcolor{gray!10}\textbf{$70\%$}}
& \multicolumn{2}{c|}{\cellcolor{gray!10}\textbf{$80\%$}}
& \multicolumn{2}{c}{\cellcolor{gray!10}\textbf{$90\%$}} \\
\cmidrule(lr){2-3}\cmidrule(lr){4-5}\cmidrule(lr){6-7}\cmidrule(lr){8-9}\cmidrule(lr){10-11}\cmidrule(lr){12-13}\cmidrule(lr){14-15}\cmidrule(lr){16-17}\cmidrule(lr){18-19}
& {\tiny C4} & {\tiny WT-2} & {\tiny C4} & {\tiny WT-2} & {\tiny C4} & {\tiny WT-2} & {\tiny C4} & {\tiny WT-2} & {\tiny C4} & {\tiny WT-2} & {\tiny C4} & {\tiny WT-2} & {\tiny C4} & {\tiny WT-2} & {\tiny C4} & {\tiny WT-2} & {\tiny C4} & {\tiny WT-2} \\
\midrule
\texttt{SparseGPT}      & 11.42 & 7.86 & 11.68 & 8.01 & 12.34 & 8.42 & 13.80 & 9.37 & 17.31 & 12.24 & 29.38 & 21.73 & 77.82 & 69.13 & 212.5 & 261.5 & 999.1 & 1510.6 \\
\texttt{ALPS}           & 11.39 & 7.85 & 11.61 & 7.98 & 12.18 & 8.33 & 13.40 & 9.14 & 15.85 & 11.30 & 22.92 & 18.19 & 45.57 & 42.39 & 106.5 & 132.9 & 365.7 & 563.6 \\
\rowcolor{darkgreen!10}
\texttt{\ourmethod-prune}
& \textbf{11.38} & \textbf{7.84} & \textbf{11.56} & \textbf{7.94} & \textbf{12.07} & \textbf{8.27} & \textbf{13.17} & \textbf{9.01} & \textbf{15.36} & \textbf{11.12} & \textbf{21.49} & \textbf{17.37} & \textbf{41.92} & \textbf{35.58} & \textbf{101.2} & \textbf{125.2} & \textbf{354.7} & \textbf{482.7} \\
\bottomrule
\end{tabular}%
}
\vspace{3pt}
\caption{Llama-3.2-3B perplexity ($\downarrow$) for one-shot unstructured pruning across nine sparsity levels. \textbf{Bold}: best across the three reported methods within each sparsity level.}
\label{tab:main-sweep-appendix-l3b-ppl}
\end{table}

\begin{table}[H]
\centering
\scriptsize
\setlength{\tabcolsep}{3pt}
\resizebox{\textwidth}{!}{%
\begin{tabular}{l|rr|rrrrrrr|r|r}
\toprule
Method & C4 & WT-2 & PIQA & ARC-E & ARC-C & HellaSwag & WinoGrande & OBQA & RTE & Avg. & Time \\
\midrule
\rowcolor{gray!10}
Dense & 11.27 & 7.05 & 81.72 & 73.06 & 58.96 & 82.91 & 75.85 & 46.40 & 82.31 & 71.60 & --- \\
SparseGPT & 14.17 & 9.47 & 78.62 & 80.13 & 56.91 & 74.18 & \textbf{72.85} & 40.40 & 75.45 & 68.36 & \textbf{58m 15s} \\
\rowcolor{darkgreen!10}
\ourmethod-prune & \textbf{13.42} & \textbf{8.88} & \textbf{79.27} & \textbf{82.07} & \textbf{58.11} & \textbf{75.51} & 72.77 & \textbf{41.80} & \textbf{78.34} & \textbf{69.70} & 9h 28m \\
\bottomrule
\end{tabular}%
}
\caption{\textbf{Qwen3.8-27B at $2{:}4$ sparsity} across all $496$ decoder linear matrices: C4/WT-2 perplexity ($\downarrow$), seven zero-shot accuracies ($\uparrow$). \textbf{Bold}: better compressed method. Total time ($\downarrow$) uses using $2$ NVIDIA L40S GPUs for Qwen3.8-27B model.}
\label{tab:qwen38-27b-2to4}
\end{table}

\clearpage
\subsection{LESS selection scores distribution}
\begin{figure}[H]
\centering
\includegraphics[width=0.90\textwidth]{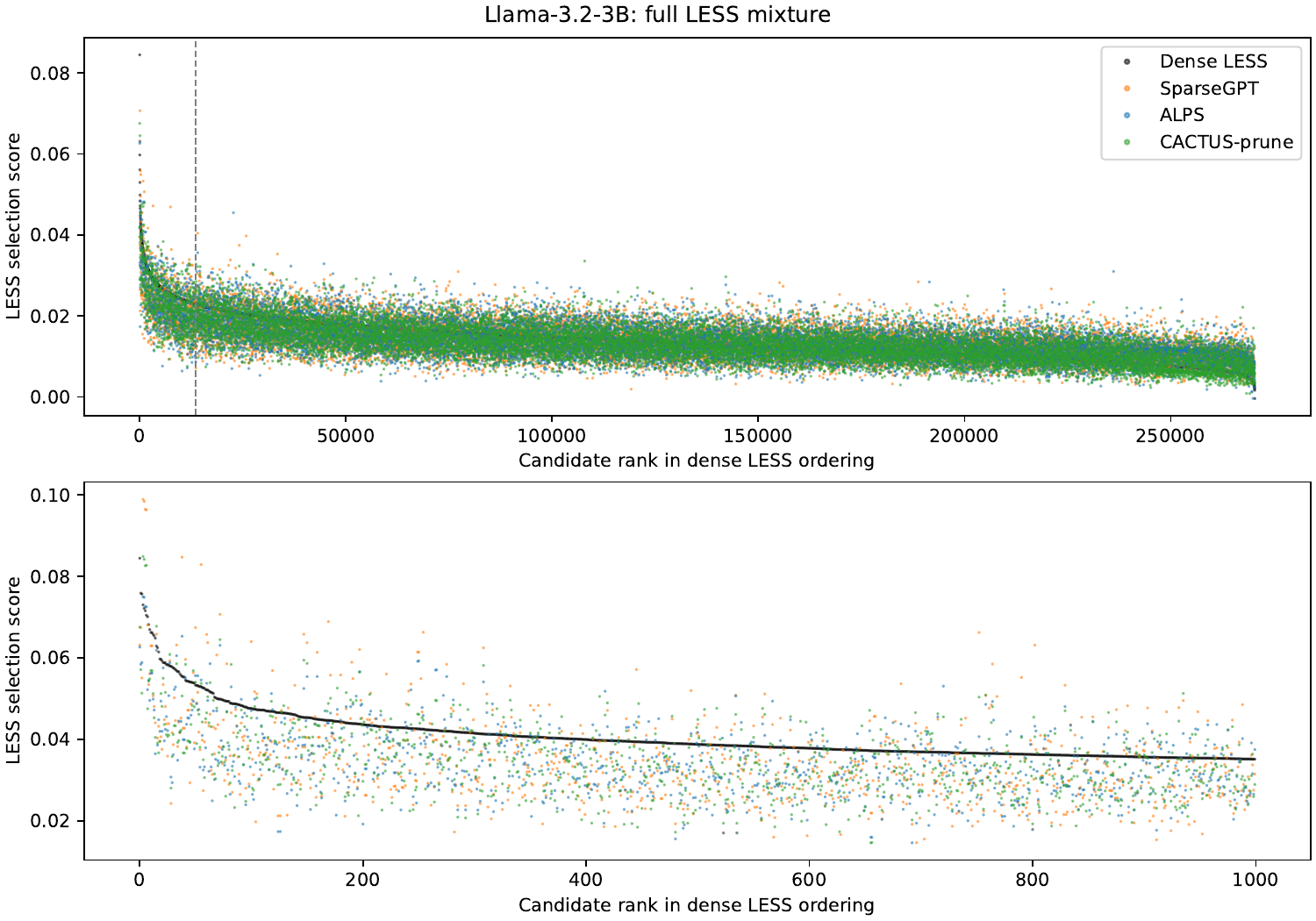}
\caption{LESS selection scores for dense Llama-3.2-3B and its SparseGPT, ALPS, and \ourmethod-prune sparse proxies, with candidates ordered by the dense model's aggregate score. The top panel spans the full instruction mixture; the bottom panel shows the first $1000$ candidates. The dashed line marks the top-$5\%$ cutoff.}
\label{fig:less-bbh-score-curve}
\end{figure}

\begin{figure}[H]
\centering
\includegraphics[width=0.90\textwidth]{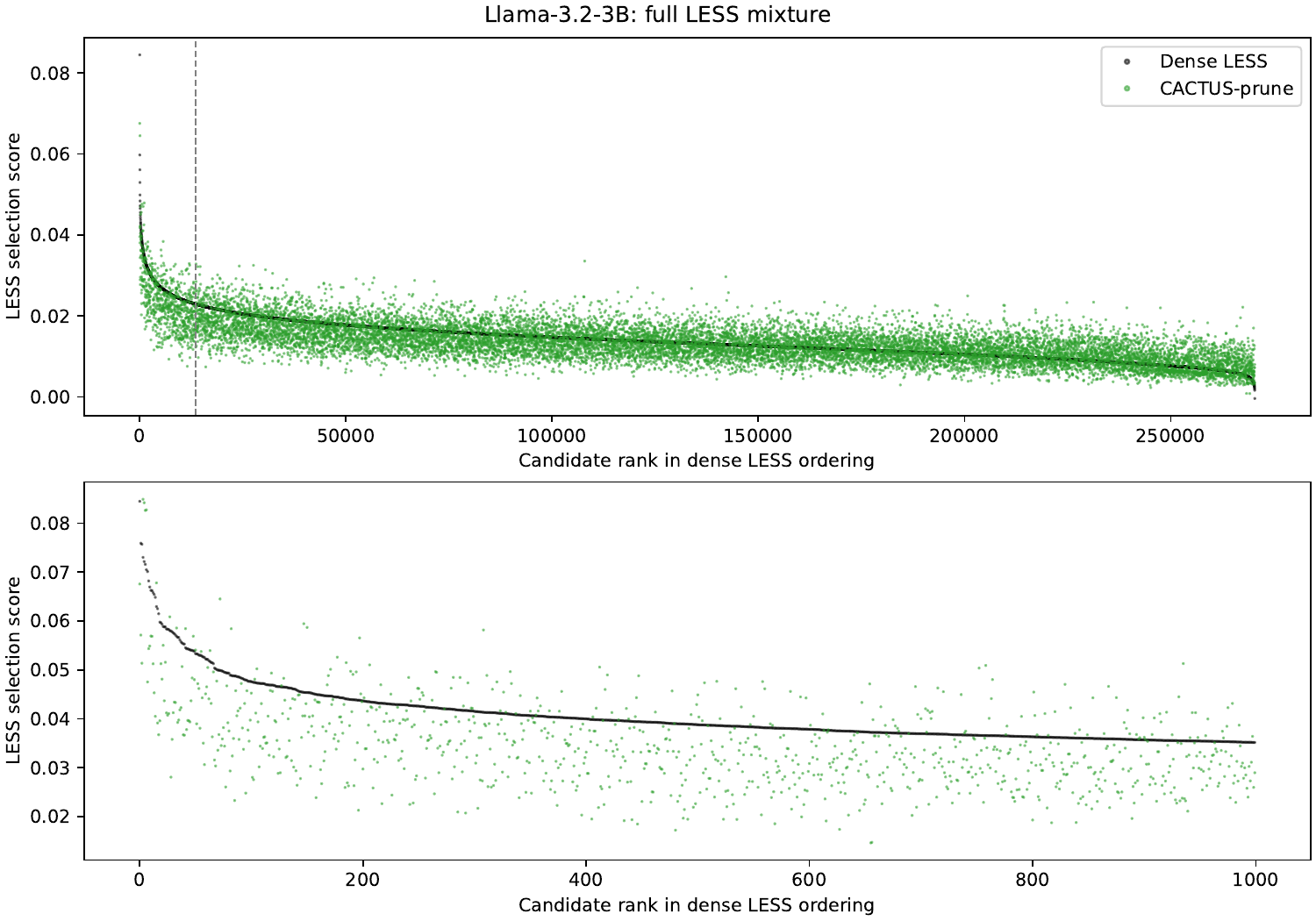}
\caption{Dense LESS and \ourmethod-prune scores from \cref{fig:less-bbh-score-curve}, shown separately.}
\label{fig:less-bbh-score-curve-dense-cactus}
\end{figure}

\subsection{Per-layer Loss Decrease}
We examine how the layerwise KFAC objective evolves over \ourmethod's ADMM iterations. \Cref{fig:kfac-loss-curve-layer0} compares representative loss trajectories for pure pruning and \slr compression, illustrating their initial non-monotonic behavior and subsequent stabilization as the penalty increases.

\begin{figure}[H]
\centering
\includegraphics[width=0.9\textwidth]{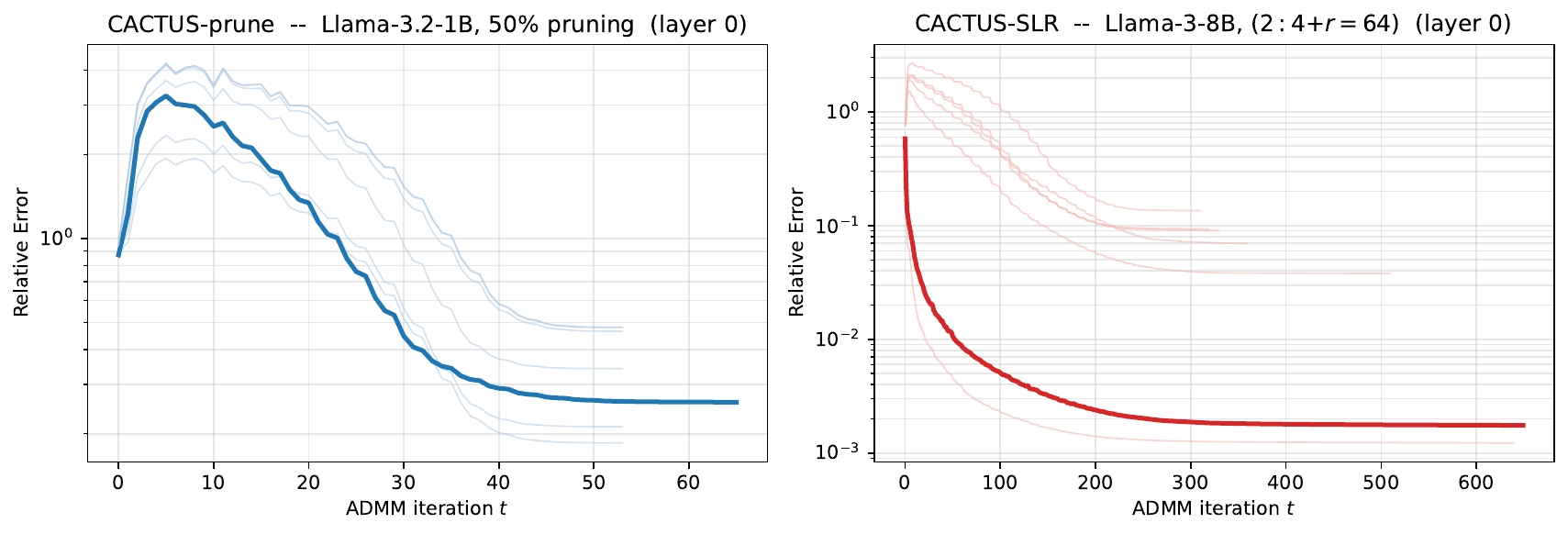}
\caption{\ourmethod relative-error trajectory for a representative layer. The KFAC objective can increase during the first ADMM iterations because the method is exploring feasible sparse candidates while the penalty schedule is still loose. As the penalty grows, primal feasibility tightens and the iterates settle, yielding the expected decrease and convergence of the objective.}
\label{fig:kfac-loss-curve-layer0}
\end{figure}

\subsection{Effect of Calibration-Set Size}\label{app:calibration-size}
We vary the number of C4 calibration sequences from $32$ to $256$ for \ourmethod-prune at $50\%$ unstructured sparsity on Llama-3.2-1B and Qwen3-4B (\cref{tab:calibration-size}). All runs use sequence length $2048$ and same calibration data and ridge coefficients $\lambda_H=0.01$ and $\lambda_G=0.1$. Perplexity improves with more calibration data on all three datasets. Doubling the sample count from $128$ to $256$ yields modest improvements, supporting $128$ sequences as a practical calibration budget. 

\begin{table}[H]
\centering
\small
\begin{tabular}{r rrr rrr}
\toprule
& \multicolumn{3}{c}{Llama-3.2-1B} & \multicolumn{3}{c}{Qwen3-4B} \\
\cmidrule(lr){2-4}\cmidrule(lr){5-7}
Sequences & C4 & WT-2 & PTB & C4 & WT-2 & PTB \\
\midrule
32  & 21.113 & 15.455 & 24.121 & 22.429 & 17.235 & 28.703 \\
64  & 20.105 & 14.566 & 23.181 & 21.952 & 16.811 & 28.576 \\
128 & 19.465 & 14.211 & 22.613 & 21.743 & 16.102 & 27.924 \\
256 & 19.132 & 13.842 & 22.399 & 21.663 & 16.057 & 27.660 \\
\bottomrule
\end{tabular}
\caption{Effect of calibration-set size on perplexity ($\downarrow$) for \ourmethod-prune at $50\%$ unstructured sparsity. All other settings are held fixed for each model.}
\label{tab:calibration-size}
\end{table}

\end{document}